
\documentclass{article}

\usepackage{microtype}
\usepackage{graphicx}
\usepackage{booktabs} 

\usepackage{hyperref}


\usepackage{amsfonts}       
\usepackage{amsmath}        
\usepackage{amsthm}
\usepackage{subcaption}
\usepackage{hhline}
\usepackage{placeins}


\newtheorem{theorem}{Theorem}[section]

\newtheorem{lemma}[theorem]{Lemma}
\theoremstyle{definition}
\newtheorem{definition}[theorem]{Definition}

\newcommand\inner[1]{\langle #1 \rangle}
\newcommand\quant[1]{\tilde{#1}}    

\newcommand{\R}{{\mathbb{R}}}  
\DeclareMathOperator*{\argmax}{arg\,max}
\DeclareMathOperator*{\argmin}{arg\,min}


\RequirePackage{fancyhdr}
\RequirePackage{color}
\RequirePackage{algorithm}
\RequirePackage{algorithmic}
\RequirePackage{natbib}
\RequirePackage{eso-pic} 
\RequirePackage{forloop}
\usepackage[letterpaper, portrait, margin=1in]{geometry}
\usepackage{authblk}
\usepackage{enumitem}
\usepackage[symbol]{footmisc}

\title{Accelerating Large-Scale Inference with Anisotropic Vector Quantization}
\date{\vspace{-5ex}}

\author{\small Ruiqi Guo*}
\author{\small Philip Sun*}
\author{\small Erik Lindgren*}
\author{\small Quan Geng}
\author{\small David Simcha}
\author{\small Felix Chern}
\author{\small Sanjiv Kumar}
\setlength{\affilsep}{2pt}
\affil{Google Research}
\affil{\texttt{\{guorq, sunphil, erikml, qgeng, dsimcha, fchern, sanjivk\}@google.com}}

\usepackage{titlesec}

\titleformat*{\section}{\large\bfseries\raggedright}
\titleformat*{\subsection}{\normalsize\bfseries\raggedright}
\titleformat*{\subsubsection}{\normalsize\scshape\raggedright}

\sloppy
\setlist{leftmargin=1.8em}

\setlength{\parindent}{0pt}

\begin{document}

\setlength{\columnsep}{16pt}
\twocolumn[
\maketitle

\vskip 0.3in
]

\renewcommand{\thefootnote}{\fnsymbol{footnote}}
\footnotetext[1]{Equal contributions.}

\begin{abstract}
Quantization based techniques are the current state-of-the-art for scaling maximum inner product search to massive databases. Traditional approaches to quantization aim to minimize the reconstruction error of the database points. Based on the observation that for a given query, the database points that have the largest inner products are more relevant, we develop a family of anisotropic quantization loss functions. Under natural statistical assumptions, we show that quantization with these loss functions leads to a new variant of vector quantization that more greatly penalizes the parallel component of a datapoint's residual relative to its orthogonal component. The proposed approach, whose implementation is open-source, achieves state-of-the-art results on the public benchmarks available at \url{ann-benchmarks.com}.
\end{abstract}

\section{Introduction} \label{sec:intro}
Maximum inner product search (MIPS) has become a popular paradigm for solving large scale classification and retrieval tasks.
For example, in recommendation systems, user queries and documents are embedded into a dense vector space of the same dimensionality and MIPS is used to find the most relevant documents given a user query~\citep{MIPSRecSys}.
Similarly, in extreme classification tasks~\citep{MIPSForEC}, MIPS is used to predict the class label when a large number of classes, often on the order of millions or even billions are involved.
Lately, MIPS has also been applied to training tasks such as scalable gradient computation in large output spaces~\citep{LossDecompHsu}, efficient sampling for speeding up softmax computation~\citep{MIPSSampledSoftmax} and sparse updates in end-to-end trainable memory systems~\citep{NeuralEpisodicControl}.

To formally define the Maximum Inner Product Search (MIPS) problem, consider a database $X = \{x_i\}_{i=1,2,\dots,n}$ with $n$ datapoints, where each datapoint $x_i \in \R^d$ in a $d$-dimensional vector space.
In the MIPS setup, given a query $q \in \R^d$, we would like to find the datapoint $x \in X$ that has the highest inner product with $q$, i.e., we would like to identify
\begin{align*}
\vspace{-0.2inch}
     x_i^* := \argmax_{x_i \in X} \inner{q, x_i}.
\vspace{-0.2inch}
 \end{align*}

\vspace{-0.05in}
Exhaustively computing the exact inner product between $q$ and $n$ datapoints is often expensive and sometimes infeasible. Several techniques have been proposed in the literature based on hashing, graph search, or quantization to solve the approximate maximum inner product search problem efficiently, and the quantization based techniques have shown strong performance~\citep{OPQ,AQ,FAISS}.

In most traditional quantization works, the objective in the quantization procedures is to minimize the reconstruction error for the database points. We show this is a suboptimal loss function for MIPS. This is because for a given query, quantization error for database points that score higher, or have larger inner products, is more important. Using this intuition, we propose a new family of score-aware quantization loss functions and apply it to multiple quantization techniques. Our contributions are as follows:
\begin{itemize}
    \item We propose the score-aware quantization loss function. The proposed loss can work under any weighting function of the inner product and regardless of whether the datapoints vary in norm.
    \item Under natural statistical assumptions, we show that the score-aware quantization loss can be efficiently calculated. The loss function leads to an anisotropic weighting that more greatly penalizes error parallel with the datapoint than error orthogonal to the datapoint.
    \item The proposed loss is generally applicable to many quantization methods.
    We demonstrate the codebook learning and quantization procedures for product quantization and vector quantization can be efficiently adapted to the proposed loss function.
    \item We show that anisotropic quantization leads to large MIPS performance gains over reconstruction loss-based techniques. Our method achieves state-of-the-art performance on standard large-scale benchmarks such as Glove-1.2M. In addition to recall gains, anisotropic quantization gives significantly more accurate inner product value approximations.
\end{itemize}

\section{Background and Related Works} \label{sec:related}
\subsection{Inference as Maximum Inner Product Search}
Efficient maximum inner product search (MIPS) is necessary for many large-scale machine learning systems. One popular approach to information retrieval systems and recommender systems uses \textit{representation learning} in the embedding space. In this framework, we learn embedding functions to map items to be retrieved in a common vector space, where the items can be words, images, users, audio, products, web pages, graph nodes, or anything of interest \citep{MIPSRecSys, weston2010large, guo2016deep, gillick2019learning, wu2017starspace}.

In recommender systems, two networks are jointly trained to generate query (user) vectors and item vectors, such that embedding vectors of queries and relevant items have high inner product when computed in the embedding space. To perform inference, we first pre-compute a database of embedding vectors for items to be recommended. When a query arrives, we compute the query embedding then return the items with the highest inner product. In extreme classification, a neural network classifier is trained, where each row of the weight matrix of the classification layer corresponds to the embedding of a class label~\citep{MIPSForEC, reddi2019stochastic}. In both settings, the computationally expensive operation is finding the item embedding that has the largest inner product with the query embedding, which can be efficiently solved by Maximum Inner Product Search (MIPS).

\subsection{Methods for accelerating MIPS}
There is a large body of similarity search literature on max inner product and nearest neighbor search. We refer readers to~\citep{JingdongSurvey,SanjivSurvey} for a comprehensive survey. We include a brief summary here.

There are two main tasks required to develop an efficient MIPS system. One task is to reduce the number of items that are scored to identify the top result. This is typically done with a space partitioning method. The other task is improving the rate at which items are scored. This is typically done with quantization, and is where the main contribution of our work lies. Successful implementation of MIPS systems requires good performance in both tasks.

Many researchers have developed high quality implementations of libraries for nearest neighbor search, such as SPTAG~\cite{sptag}, FAISS~\cite{FAISS}, and hnswlib~\cite{hnsw}. We compare with the ones available on ANN-Benchmarks in Section~\ref{sec:experiments}.

\subsubsection{Reducing the Number of Evaluations}
One class of approaches to reducing the number of items scored is space partitioning. These approaches partition the space into different buckets. To perform MIPS in this setting, we find the relevant buckets for a given query and score only the items in these buckets.

Examples of this approach include tree search methods and locality sensitive hashing. Tree search methods such as \citep{FLANN, RPTree} partition the space recursively, forming a tree. Locality sensitive hashing \citep{ALSH, neyshabur2014symmetric, LSH, FALCONN} partitions the space using a similarity-preserving hash function. There is also a class of approaches based on graph search \citep{hnsw, FANNG}. These methods work by navigating a graph by greedily selecting the neighbor with the highest dot product.

\subsubsection{Quantization}
Quantization is an important technique for building state-of-the-art MIPS systems in large scale settings. Below we describe the several ways that quantization improves performance.
\begin{itemize}
    \item Efficient dot product computations: We can calculate the dot product of a $d$ dimensional query vector with $n$ quantized points in time $O(dk + mn)$ using look up tables, where $k$ is the size of each quantization codebook and $m$ is the number of codebooks. For typical choices of $k$ and $m$ this is faster than the $O(n d)$ complexity required for exact computation.
    \item Memory bandwidth: modern processors need workloads with a high amount of computation per memory read in order to fully utilize their resources. Quantization compresses datapoints, resulting in less memory bandwidth usage and higher processor utilization.
    \item Storage: quantized datapoints take up less space in memory or on disk. For large-scale datasets, this allows more datapoints to be stored on a single machine.
\end{itemize}

One approach to quantization is with random projections \citep{charikar2002similarity, vempala2005random, li2019random}. One issue with random projections is that quantization is oblivious to the data, and it may be more efficient to use a quantization method that is able to exploit structure in the data. Quantization methods of this form are available for binary quantization \citep{KmeansHash, deephashing, stochastic_hashing}, product quantization \citep{PQ, QUIPS, zhang2014composite,MSQ}, additive quantization \citep{AQ, LSQ}, and ternary quantization \citep{zhu2016trained}. We discuss product quantization in more detail in Section~\ref{sec:quantization}. There are also lines of work that focus on learning transformations before quantization~\citep{ITQ,OPQ, Sablayrolles18}. Learning quantization from the observed data distribution also has been studied in \cite{Marcheret09, Morozov_2019_ICCV, babenko16}.

Our work differs from the above methods as they essentially focus on minimizing reconstruction error as a loss function, while we develop an approach in the following section where we minimize a novel loss function that is designed to improve the downstream MIPS objective.

We also highlight the work \citet{may2019downstream}, where they consider quantization objectives for word embeddings that improve the downstream performance of training models for natural language processing tasks.

\section{Problem Formulation} \label{sec:formulation}
Common quantization techniques focus on minimizing the reconstruction error (sum of squared error) when $x$ is quantized to $\quant{x}$. It can be shown that minimizing the reconstruction errors is equivalent to minimizing the expected inner product quantization error under a mild condition on the query distribution without assumption on the database point distribution. Indeed, consider the quantization objective of minimizing the expected total inner product quantization errors over the query distribution:
\begin{align}
   \mathbb{E}_q\sum_{i=1}^n (\inner{q, x_i} - \inner{q, \quant{x_i}})^2= \mathbb{E}_q \sum_{i=1}^n \inner{q, x_i - \quant{x_i}}^2. \label{eq:quant_objective}
\end{align}

Under the assumption that $q$ is isotropic, i.e., $\mathbb{E} [q q^T] = c I$, where $I$ is the identity matrix and $c \in \R^{+}$, the objective function becomes 
\begin{align*}
\sum_{i=1}^n \mathbb{E}_q \inner{q, x_i - \quant{x_i}}^2 &= \sum_{i=1}^n \mathbb{E}_q (x_i - \quant{x_i})^T q q^T  (x_i - \quant{x_i}) \\
&= c \sum_{i=1}^n  \|x_i - \quant{x_i} \|^2
\end{align*}
Therefore, the objective becomes minimizing the reconstruction errors of the database points $\sum_{i=1}^n \|x_i-\quant{x_i} \|^2$, and this has been considered extensively in the literature.

One key observation about the above objective function~\eqref{eq:quant_objective} is that it takes expectation over all possible combinations of datapoints $x$ and queries $q$. However, it is easy to see that not all pairs of $(x,q)$ are equally important. The approximation error on the pairs which have a high inner product is far more important since they are likely to be among the top ranked pairs and can greatly affect the search result, while for the pairs whose inner product is low the approximation error matters much less. In other words, for a given datapoint $x$, we should quantize it with a bigger focus on its error with those queries which have high inner product with $x$. See Figure \ref{fig:intuition} for the illustration.

Following this key observation, we propose the \textit{score-aware quantization loss}. This is a new loss function for quantization that weighs the inner product approximation error by $w$, an arbitrary function of our choice that returns a weight based on the value of the true inner product. Specifically, we define the loss function as the following:

\begin{definition}
Given a datapoint $x_i$, its quantization $\quant{x_i}$, and a weight function $w: \mathbb{R} \mapsto \mathbb{R}^+$ of the inner product score, the  \textit{score-aware quantization} loss with respect to a query distribution $\mathcal{Q}$ is defined as
\begin{equation}
\ell(x_i, \quant{x_i}, w) = \mathbb{E}_{q \sim \mathcal{Q}}[w(\inner{q, x_i}) \inner{q, x_i-\quant{x_i}}^2].
\label{eq:loss_function}    
\end{equation}

\end{definition}

\begin{figure*}
\begin{subfigure}[b]{0.35\textwidth}
\includegraphics[width=1 \textwidth]{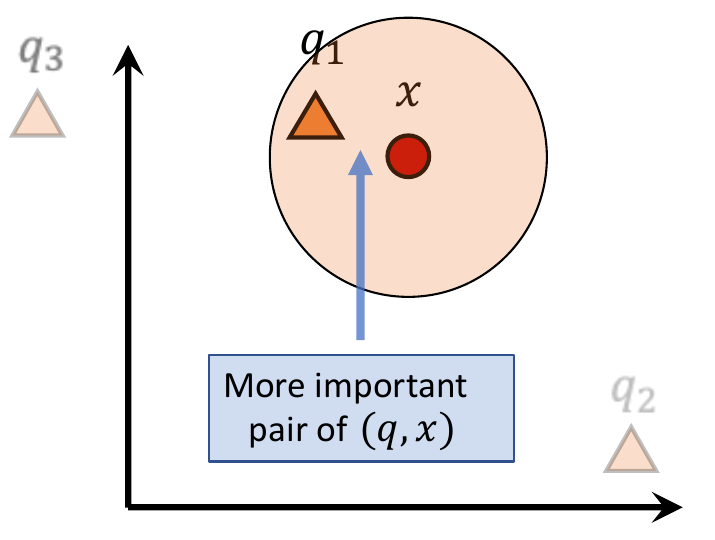}
\subcaption{\label{fig:intuition1}}
\end{subfigure}
\begin{subfigure}[b]{0.32\textwidth}
\includegraphics[width=1 \textwidth]{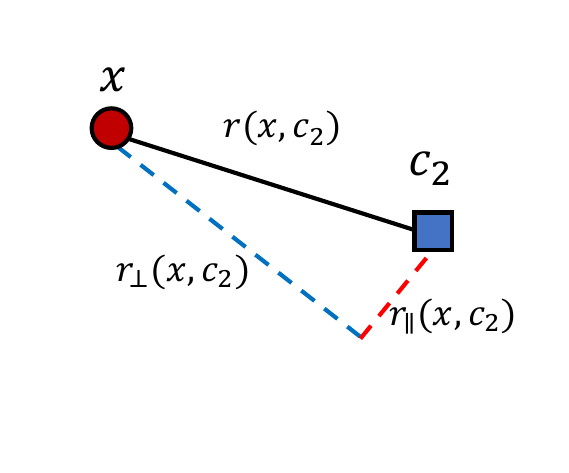}
\subcaption{\label{fig:quant_notation}}
\end{subfigure}
\begin{subfigure}[b]{0.32\textwidth}
\includegraphics[width=1 \textwidth]{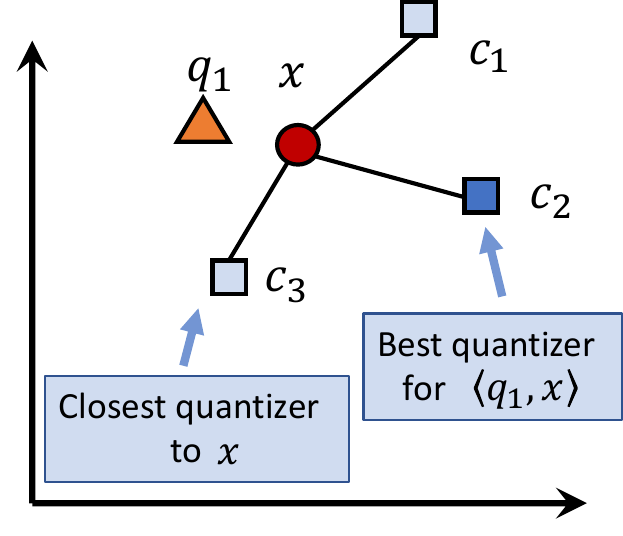}
\subcaption{\label{fig:intuition2}}
\end{subfigure}   
\caption{(a) Not all pairs of $q$ and $x$ are equally important: for $x$, it is more important to accurately quantize the inner product of $\inner{q_1, x}$ than $\inner{q_2, x}$ or $\inner{q_3, x}$, because $\inner{q_1, x}$ has a higher inner product and thus is more likely to be the maximum; (b) Quantization error of $x$ given one of its quantizer $c_2$ can be decomposed to a parallel component $r_{\parallel}$ and an orthogonal component $r_{\perp}$. (c) Graphical illustration of the intuition behind Equation~\eqref{eq:loss_function}. Even if $c_3$ is closer to $x$ in terms of Euclidean distance, $c_2$ is a better quantizer than $c_3$ in terms of inner product approximation error of $\inner{q_1, x-c}$. Notice that $c_3$ incur more parallel loss ($r_{\parallel}$), while $c_2$ incur more orthogonal loss ($r_{\perp}$).}\label{fig:intuition}
\end{figure*}

Since the norm of $q$ does not matter to the ranking result, we can assume $||q||=1$ without loss of generality. Similarly, assuming we have no prior knowledge of the query distribution $\mathcal{Q}$, we trivially assume $q$ is uniformly spherically distributed. The expectation can be recomputed if $\mathcal{Q}$ is known or estimated empirically.

\subsection{Analyzing Score-Aware Quantization Loss}\label{sec:analyze_general_w}
We show that regardless of the choice of $w$, a score-aware quantization loss $\ell(x_i, \quant{x_i}, w)$ always decomposes into an anisotropic weighted sum of the magnitudes of the parallel and orthogonal residual errors. These two errors are defined as follows: first, define the residual error of a quantization $\quant{x_i}$ as $x_i-\quant{x_i}$. The \textit{parallel residual error} is the component of the residual error parallel to the datapoint $x_i$; it can be computed as

$$r_{\parallel}(x_i,\quant{x_i})=\dfrac{\inner{(x_i-\quant{x_i}),x_i}x_i}{||x_i||^2}.$$

\textit{Orthogonal residual error} is defined analogously, and can be computed as

$$r_\perp(x_i, \quant{x}_i) = (x_i - \quant{x}_i) - r_\parallel(x_i, \quant{x}_i).$$

These two components are illustrated in Figure~\ref{fig:quant_notation}. The relative weights of these two error components in contributing to the score-aware loss are determined by the choice of $w$.
\begin{theorem}\label{thm:general_w}
Suppose we are given a datapoint $x_i$, its quantization $\quant{x_i}$, and a weight function $w$. Assuming that query $q$ is uniformly distributed in the $d$-dimensional unit sphere, the score-aware quantization loss equals

$$\begin{aligned}
\ell(x_i, \quant{x_i}, w) &= h_{\parallel}(w,||x_i||)||r_{\parallel}(x_i,\quant{x_i})||^2 \\
&+ h_{\perp}(w,||x_i||)||r_{\perp}(x_i,\quant{x_i})||^2
\end{aligned}$$

with $h_{\parallel}$ and $h_{\perp}$ defined as follows:
$$\begin{aligned}
h_{\parallel} &:= (d-1)\int_0^\pi w(||x_i||\cos\theta)(\sin^{d-2}\theta-\sin^{d}\theta) d\theta \\
h_{\perp} &:= \int_0^\pi w(||x_i||\cos\theta)\sin^d\theta d\theta.
\end{aligned}$$

\end{theorem}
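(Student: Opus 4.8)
The plan is to compute $\ell(x_i,\quant{x_i},w) = \mathbb{E}_q[w(\inner{q,x_i})\inner{q,x_i-\quant{x_i}}^2]$ by decomposing the residual $r := x_i-\quant{x_i}$ into its parallel part $r_\parallel$ (along $x_i$) and its orthogonal part $r_\perp$, and exploiting the rotational symmetry of the uniform distribution on the sphere. First I would write $\inner{q,r}^2 = \inner{q,r_\parallel}^2 + 2\inner{q,r_\parallel}\inner{q,r_\perp} + \inner{q,r_\perp}^2$. The weight $w(\inner{q,x_i})$ depends only on the component of $q$ along the direction $\hat{x}_i := x_i/\|x_i\|$. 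Since $r_\parallel$ is parallel to $\hat{x}_i$ and $r_\perp$ is orthogonal to it, the cross term $\mathbb{E}_q[w(\inner{q,x_i})\inner{q,r_\parallel}\inner{q,r_\perp}]$ vanishes: conditioned on the component of $q$ along $\hat{x}_i$, the distribution of the orthogonal component of $q$ is symmetric (it is uniform on a sphere in the orthogonal complement), so $\mathbb{E}_q[\inner{q,r_\perp}\mid \text{component along }\hat x_i] = 0$. This leaves two terms to evaluate.

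Next I would set up spherical coordinates adapted to $\hat{x}_i$: write $q = \cos\theta\,\hat{x}_i + \sin\theta\, u$, where $\theta\in[0,\pi]$ is the polar angle from $\hat{x}_i$ and $u$ is a unit vector in the $(d-1)$-dimensional orthogonal complement, distributed uniformly on that sphere; the density in $\theta$ is proportional to $\sin^{d-2}\theta$. Then $\inner{q,x_i} = \|x_i\|\cos\theta$, so $w(\inner{q,x_i}) = w(\|x_i\|\cos\theta)$ depends only on $\theta$. For the parallel term, $\inner{q,r_\parallel} = \|r_\parallel\|\cos\theta$ (up to sign), giving a factor $\cos^2\theta$. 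For the orthogonal term, $\inner{q,r_\perp} = \sin\theta\,\inner{u,r_\perp}$, and averaging over $u$ uniform on the $(d-1)$-sphere gives $\mathbb{E}_u[\inner{u,r_\perp}^2] = \|r_\perp\|^2/(d-1)$, producing a factor $\sin^2\theta\cdot\|r_\perp\|^2/(d-1)$. So up to the overall normalization constant $Z^{-1}$ of the $\theta$-density (with $Z = \int_0^\pi \sin^{d-2}\theta\,d\theta$),
$$\ell = \frac{\|r_\parallel\|^2}{Z}\int_0^\pi w(\|x_i\|\cos\theta)\cos^2\theta\sin^{d-2}\theta\,d\theta + \frac{\|r_\perp\|^2}{(d-1)Z}\int_0^\pi w(\|x_i\|\cos\theta)\sin^d\theta\,d\theta.$$

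Finally I would reconcile this with the stated $h_\parallel, h_\perp$. Using $\cos^2\theta\sin^{d-2}\theta = \sin^{d-2}\theta - \sin^d\theta$, the parallel integrand becomes $w(\|x_i\|\cos\theta)(\sin^{d-2}\theta - \sin^d\theta)$, matching the form of $h_\parallel$ up to the factor $(d-1)/Z$; the orthogonal coefficient carries a factor $1/((d-1)Z)$. To land exactly on the paper's normalization one checks that the intended convention absorbs $Z$ (equivalently, normalizes so that the $(d-1)Z$ and $1/Z$ bookkeeping produces the clean $(d-1)$ prefactor on $h_\parallel$ and none on $h_\perp$); I would verify this by evaluating the $w\equiv 1$ case and confirming it reduces to a constant times $\|r\|^2 = \|r_\parallel\|^2 + \|r_\perp\|^2$, consistent with the isotropic-query reduction to reconstruction error discussed earlier. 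The main obstacle is bookkeeping the normalization constants carefully and rigorously justifying the vanishing of the cross term and the value of $\mathbb{E}_u[\inner{u,r_\perp}^2]$ via the symmetry of the conditional distribution of $q$; the rest is a routine change of variables to spherical coordinates.
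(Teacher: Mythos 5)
Your proposal is correct and follows essentially the same route as the paper's proof: the paper likewise conditions on $\inner{q,x_i}=t=\|x_i\|\cos\theta$, kills the cross term by symmetry of the orthogonal component of $q$, uses $\mathbb{E}\big[\inner{q_\perp,r_\perp}^2\big]=\frac{1-t^2/\|x_i\|^2}{d-1}\|r_\perp\|^2$, and integrates against the density $\propto\sin^{d-2}\theta$, finishing with the identity $\cos^2\theta\sin^{d-2}\theta=\sin^{d-2}\theta-\sin^d\theta$. The normalization discrepancy you flag is real but harmless: the paper itself drops the constant of proportionality in the density (its appendix even states $h_\parallel$ and $h_\perp$ without the $(d-1)$ prefactor appearing in the theorem), and since both coefficients differ from your derived ones by the same positive constant independent of $\quant{x}_i$, the theorem is to be read up to such a common factor, exactly as your $w\equiv 1$ sanity check confirms.
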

\begin{proof}
See Appendix Section~\ref{proof:general_w}.
\end{proof}

Any weight function would work for the above proposed loss. For the MIPS problem, it is intuitive to choose $w$ so that it puts greater weight on larger inner products. For such $w$, we show that parallel quantization error is weighted more heavily than orthogonal quantization error. This is formalized below and illustrated in Figure~\ref{fig:intuition}.

\begin{theorem}\label{thm:parallel_greater}
For any $w$ such that $w(t)=0$ for $t<0$ and $w(t)$ is monotonically non-decreasing for $t\ge0$,

$$h_{\parallel}(w, ||x_i||)\ge h_{\perp}(w, ||x_i||)$$

with equality if and only if $w(t)$ is constant for $t\in[-||x_i||, ||x_i||].$
\end{theorem}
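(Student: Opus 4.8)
The plan is to work directly with the difference $h_\parallel - h_\perp$; write $\alpha := \|x_i\|$. Subtracting the two formulas of Theorem~\ref{thm:general_w} and combining integrands, the claim reduces to showing
\[
h_\parallel(w,\alpha)-h_\perp(w,\alpha)=\int_0^\pi w(\alpha\cos\theta)\bigl[(d-1)\sin^{d-2}\theta-d\sin^{d}\theta\bigr]\,d\theta\;\ge\;0,
\]
together with a characterization of when it is exactly zero. The key trick is that the bracketed trigonometric polynomial is an exact derivative: using $\cos^2\theta=1-\sin^2\theta$ one checks $\frac{d}{d\theta}\bigl[\sin^{d-1}\theta\cos\theta\bigr]=(d-1)\sin^{d-2}\theta-d\sin^{d}\theta$. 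Integrating by parts and noting that $\sin\theta$ vanishes at $\theta=0$ and $\theta=\pi$ (here $d\ge2$), the boundary term dies and we are left with
\[
h_\parallel(w,\alpha)-h_\perp(w,\alpha)=\alpha\int_0^\pi\sin^{d}\theta\,\cos\theta\;dw(\alpha\cos\theta),
\]
where $dw$ is the nonnegative Lebesgue--Stieltjes measure of the non-decreasing function $w$ (equal to $w'(\alpha\cos\theta)\,d(\alpha\cos\theta)$ when $w$ is differentiable).

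I would then split this integral at $\theta=\pi/2$. On $(\pi/2,\pi)$ the argument $\alpha\cos\theta$ is strictly negative, where $w\equiv0$, so $dw$ carries no mass and that piece contributes nothing (any atom of $w$ at the origin, i.e.\ at $\theta=\pi/2$, is also annihilated by the factor $\cos(\pi/2)=0$). On $(0,\pi/2)$ the three factors $\sin^{d}\theta$, $\cos\theta$, and the increments of $w$ are all nonnegative, since $w$ is non-decreasing on $[0,\infty)$; hence $h_\parallel-h_\perp\ge0$. Because $\sin^{d}\theta\cos\theta>0$ throughout the open interval $(0,\pi/2)$, the integral vanishes exactly when $dw$ assigns zero mass to its image $(0,\alpha)$ --- that is, $w$ is constant on $[0,\alpha]$ --- which combined with $w\equiv0$ on $(-\infty,0)$ is precisely the stated condition that $w$ be constant on $[-\alpha,\alpha]$.

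The hard part is really just the limited regularity of $w$: being only monotone it need not be differentiable, or even continuous, so the integration by parts must be phrased in Lebesgue--Stieltjes form, or proved first for smooth $w$ and then extended by approximating a general monotone $w$ from below by smooth non-decreasing functions; correspondingly the equality conclusion is to be read up to the value of $w$ on a null set. The remaining ingredients --- verifying the derivative identity for $\sin^{d-1}\theta\cos\theta$, checking that the boundary terms vanish, and translating ``$dw$ has no mass on $(0,\alpha)$'' back into the stated equality condition --- are routine.
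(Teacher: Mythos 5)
Your argument is correct and is essentially the paper's own proof in different packaging: the paper integrates $I_d=\int_0^\pi w(\|x_i\|\cos\theta)\sin^d\theta\,d\theta$ by parts to obtain $(d-1)I_{d-2}-dI_d=\|x_i\|\int_0^\pi w'(\|x_i\|\cos\theta)\cos\theta\sin^d\theta\,d\theta$, which is exactly your identity $h_\parallel-h_\perp=\alpha\int_0^\pi w'(\alpha\cos\theta)\sin^d\theta\cos\theta\,d\theta$ derived from the antiderivative $\sin^{d-1}\theta\cos\theta$, and both finish with the same positivity and equality argument on $[0,\pi/2]$ (your Lebesgue--Stieltjes treatment of non-smooth $w$ is a rigor upgrade the paper skips). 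One notational caution: taken literally, $dw(\alpha\cos\theta)=w'(\alpha\cos\theta)\,d(\alpha\cos\theta)=-\alpha\sin\theta\,w'(\alpha\cos\theta)\,d\theta$ would flip the sign and introduce an extra power of $\sin\theta$; the measure your display actually needs is $w'(\alpha\cos\theta)\,d\theta$ (equivalently, after substituting $t=\alpha\cos\theta$, the kernel $\tfrac{1}{\alpha}\,t\,(1-t^2/\alpha^2)^{(d-1)/2}$ against the genuinely nonnegative measure $dw(t)$ on $[-\alpha,\alpha]$).
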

\begin{proof}
See Appendix Section~\ref{proof:parallel_greater}.
\end{proof}

\subsection{Special case of $w(t)=\mathbf{I}(t\ge T)$}
\label{sec:indicator}
One particular $w$ of interest is the function $w(t)=\mathbf{I}(t\ge T)$. This weight function only considers quantization loss when the dot product is above a threshold $T$. Since $\mathbf{I}(t\ge T)$ satisfies the conditions for Theorem \ref{thm:parallel_greater}, it effectively penalizes parallel quantization error more greatly than orthogonal error. With this weight function, our expressions for $h_{\parallel}$ and $h_{\perp}$ simplify to:
$$\begin{aligned}
h_{\parallel} &= (d-1)\int_0^{\arccos (T/||x_i||)} \sin^{d-2}\theta-\sin^{d}\theta d\theta \\
h_{\perp} &= \int_0^{\arccos (T/||x_i||)} \sin^d\theta d\theta
\end{aligned}$$

With $w(t)=\mathbf{I}(t\ge T)$, we have
\begin{align*}
\ell(x_i, \quant{x}_i, w) =\;& h_{\parallel}(w, ||x_i||)||r_{\parallel}(x_i,\quant{x}_i)||^2 + \\
                        & h_{\perp}(w, ||x_i||) ||r_{\perp}(x_i,\quant{x}_i)||^2 \\
\propto\; &\eta(w, ||x_i||)||r_{\parallel}(x_i,\quant{x}_i)||^2 + ||r_{\perp}(x_i,\quant{x}_i)||^2
\end{align*}
where $\eta(w, ||x_i||) := \dfrac{h_{\parallel}(w, ||x_i||)}{h_{\perp}(w, ||x_i||)}$.

\begin{figure}
    \centering
    \includegraphics[width=0.8\linewidth]{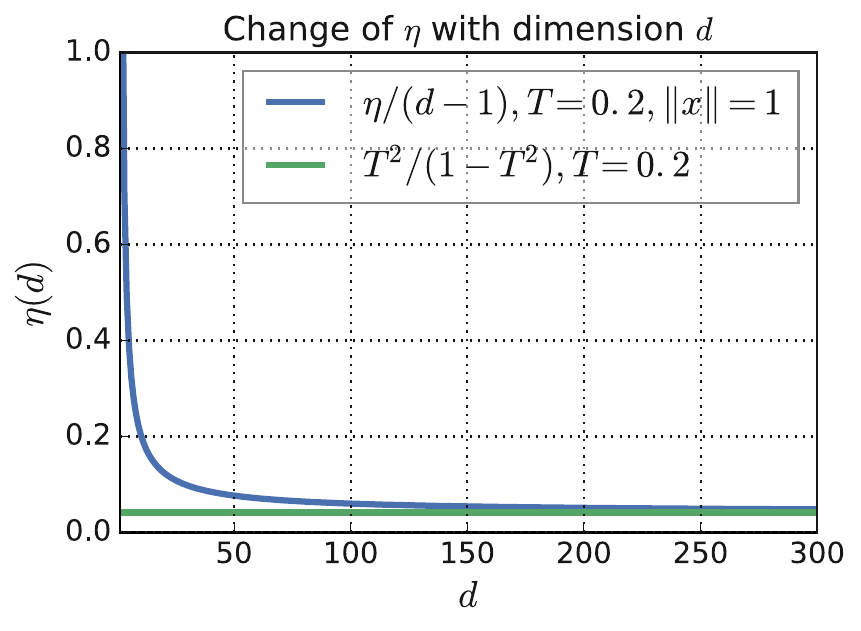}
    \caption{The ratio $\eta(\mathbf{I}(t\ge T=0.2), \|x\|=1)/(d-1)$ in Theorem \ref{thm:eta-limit} computed analytically as a function of $d$ quickly approaches the limit defined in Equation \eqref{eq:eta-limit}.}
    \label{fig:lambda}
\end{figure}

We can recursively compute $\eta(w=\mathbf{I}(t\ge T), ||x_i||)$ as a function of $d$ analytically. Furthermore we can prove that $\frac{\eta}{d-1}$ has an limit as $d\to\infty$, as demonstrated empirically in Figure~\ref{fig:lambda}. We can use this limit, which is easy to evaluate, as a proxy of $\eta$ in computing the proposed loss.
\begin{theorem}\label{thm:eta-limit}
\begin{equation}\label{eq:eta-limit}
    \lim_{d\to\infty}\dfrac{\eta(\mathbf{I}(t\ge T), ||x_i||)}{d-1}=\dfrac{(T/||x_i||)^2}{1-(T/||x_i||)^2}
\end{equation}
\end{theorem}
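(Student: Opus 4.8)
The plan is to reduce the stated limit to the ratio of two Wallis-type integrals and then evaluate that ratio by a Laplace-method concentration argument at the endpoint. Throughout I work in the meaningful regime $0\le T<\|x_i\|$ (this is exactly the range in which the right-hand side is finite). Abbreviate $c:=T/\|x_i\|\in[0,1)$ and $b:=\arccos c\in(0,\pi/2]$, and set $I_m:=\int_0^b \sin^m\theta\,d\theta$. Using the simplified forms of $h_\parallel$ and $h_\perp$ for $w(t)=\mathbf{I}(t\ge T)$ recorded in Section~\ref{sec:indicator}, we have $h_\parallel=(d-1)(I_{d-2}-I_d)$ and $h_\perp=I_d$, hence
$$\frac{\eta(\mathbf{I}(t\ge T),\|x_i\|)}{d-1}=\frac{I_{d-2}}{I_d}-1 .$$
So it suffices to prove $I_{d-2}/I_d\to 1/\sin^2 b$ as $d\to\infty$; since $\sin^2 b=1-c^2$, this gives the claimed limit $c^2/(1-c^2)=\frac{(T/\|x_i\|)^2}{1-(T/\|x_i\|)^2}$.

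The key observation is that $I_d/I_{d-2}$ is precisely the average of the continuous, bounded function $\theta\mapsto\sin^2\theta$ against the probability density proportional to $\sin^{d-2}\theta$ on $[0,b]$. Because $b\le\pi/2$, the function $\sin$ is increasing on $[0,b]$ (strictly, apart from at most one boundary point), so as $d\to\infty$ this density concentrates at the upper endpoint $\theta=b$. Concretely, for fixed $\delta\in(0,b)$, bounding $\sin^{d-2}\theta\le\sin^{d-2}(b-\delta)$ on $[0,b-\delta]$ and $\sin^{d-2}\theta\ge\sin^{d-2}(b-\delta/2)$ on $[b-\delta/2,b]$ shows that the weight placed outside $[b-\delta,b]$ is $O\big( (\sin(b-\delta)/\sin(b-\delta/2))^{d}\big)\to 0$, since the base of that power is strictly less than $1$. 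As $\sin^2\theta$ differs from $\sin^2 b$ by at most the $\delta$-modulus of continuity on $[b-\delta,b]$ and is bounded by $1$ elsewhere, sending $d\to\infty$ and then $\delta\to0$ yields $I_d/I_{d-2}\to\sin^2 b$, i.e. $I_{d-2}/I_d\to1/\sin^2 b$. Combining with the display above gives $\eta/(d-1)\to 1/\sin^2 b-1=\cos^2 b/\sin^2 b$, which is the desired formula.

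The only step with any content is the concentration estimate of the second paragraph — making precise that asymptotically all the mass of $\sin^{d-2}\theta\,d\theta$ lives in a shrinking neighborhood of $b$. This is a routine endpoint Laplace estimate once one uses monotonicity of $\sin$ on $[0,\pi/2]$, so I do not expect a genuine obstacle; the main care is just in choosing the comparison points to get matching upper and lower bounds. As an alternative that also exposes the constant, one can substitute $\theta=b-u/d$ and apply dominated convergence with the exponential envelope $e^{-u\cot b}$, which gives the sharper asymptotic $I_d\sim(\sin b)^d\tan b/d$ and hence $I_{d-2}/I_d\to 1/\sin^2 b$ directly. Finally, the restriction $T<\|x_i\|$ is not merely technical: at $T=0$ the peak of $\sin^d$ moves to the interior point $\pi/2$, the ratio $I_{d-2}/I_d$ tends to $1$, and the formula correctly degenerates to $0$.
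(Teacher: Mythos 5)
Your proof is correct, but it takes a genuinely different route from the paper's. You evaluate $\lim I_{d-2}/I_d$ directly by a Laplace-type concentration argument: viewing $I_d/I_{d-2}$ as the expectation of $\sin^2\theta$ under the probability density $\propto\sin^{d-2}\theta$ on $[0,b]$, and exploiting that $\sin$ is increasing on $[0,b]$ with $b\le\pi/2$ so the mass concentrates at the right endpoint. The paper instead derives the integration-by-parts recursion $dI_d=(d-1)I_{d-2}-\cos\alpha\sin^{d-1}\alpha$, uses Cauchy--Schwarz to show $I_{d-2}/I_d$ is monotone (hence convergent), divides the recursion by $dI_d$, and extracts the constant by comparing the boundary correction term at consecutive indices. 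Your approach is more self-contained: it produces existence and value of the limit simultaneously and, as you note, even gives the sharper asymptotic $I_d\sim(\sin b)^d\tan b/d$ via the substitution $\theta=b-u/d$. The paper's argument is shorter on the page but leans on a step that your route bypasses entirely, namely that $\lim \cos\alpha\sin^{d-1}\alpha/(dI_d)$ is strictly positive for $T>0$; without a separate estimate (e.g., $I_d\le\sin^{d+1}\alpha/(d\cos\alpha)$, which is essentially your endpoint Laplace bound), the paper's subsequent division by that limit is a $0/0$ hazard. One small wording issue in yours: at $T=0$ the peak $\pi/2$ is the \emph{endpoint} of the truncated domain $[0,b]$, not an interior point of it; the conclusion $I_{d-2}/I_d\to1$ and $\eta/(d-1)\to0$ there is nonetheless correct and is already covered by your main argument since it applies uniformly for $b\in(0,\pi/2]$.
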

\begin{proof}
See Appendix Section~\ref{proof:normalized-dataset-results}.
\end{proof}

As special cases, when $T=0$, $\eta(\mathbf{I}(t\ge 0), ||x_i||)=1$ which implies parallel and orthogonal errors are weighted equally. When $T=\||x_i||$, we have $\eta(\mathbf{I}(t\ge ||x_i||), ||x_i||)=\infty$ which indicates we should only consider parallel error.

Theorem~\ref{thm:general_w} shows that the weight of each datapoint's parallel and orthogonal quantization errors are dependent on $||x_i||$. However, when the database has constant norm, i.e. $||x_i|| = c$, we can use the following simplified form:
\begin{align*}
&\sum_{i=1}^{n} \ell(x_i, \quant{x}_i, \mathbf{I}(t\ge T)) \\
&\propto \eta(w,c) \sum_{i=1}^n ||r_{\parallel}(x_i, \quant{x}_i)||^2 + \sum_{i=1}^n ||r_{\perp}(x_i,\quant{x}_i)||^2
\end{align*}

\section{Application to Quantization Techniques}\label{sec:quantization}

In this section we consider the codebook learning and quantization procedure for our proposed anisotropic loss function. In the previous sections, we established that the loss function, $\ell(x_i, \quant{x}_i, w)$ leads to a weighted combination of parallel quantization error and orthogonal quantization error. In practice, we can choose a fixed $\eta$ according to the choice of $w$ such as the one suggested in Section~\ref{sec:indicator}.

In vector quantization, we first construct a dictionary $C = \{c_1, c_2, \ldots, c_k\}$. To quantize a vector $x$ we replace $x$ with one of the codewords. Typically, the quantized vector $\tilde{x}$ minimizes some loss function: $\quant{x} = \argmin_{c_1, c_2, \ldots, c_k} L(x_i, c_i)$.

After we quantize a database of $n$ points, we can calculate the dot product of a query vector $q$ with all quantized points in $O(k d + n)$ time. This is much better than the $O(n d)$ time required for the original unquantized database. We achieve the $O(kd+n)$ runtime by computing a lookup table containing the inner product of the $q$ with each of the $k$ codewords in $O(k d)$ time. We then do a table lookup for each of the $n$ datapoints to get their corresponding inner products.

In order to construct the dictionary $C$, we need to optimize the choice of codewords over the loss function. For $\ell_2$-reconstruction loss, the optimization problem becomes
\[
\min_{c_1, c_2, \ldots, c_k \in \mathbb{R}^d}\sum_{x_i}\min_{\quant{x}_i \in \{c_1, c_2, \ldots, c_k\}}\|x_i - \quant{x}_i\|^2.
\]
This is exactly the well-studied $k$-means clustering objective, which is often solved using  Lloyd's algorithm.

If, as in the previous section, we have our loss function $\ell(x, \tilde{x}) = h_{i, \parallel} \|r_\parallel(x_i, \quant{x_i})\|^2 + h_{i, \perp}\|r_\perp(x_i, \quant{x_i})\|^2$ for appropriate scaling parameters $h_{i, \parallel}$, $h_{i, \perp}$, we obtain a new objective function we call the \textit{anisotropic vector quantization problem}.

\begin{definition}
Given a dataset $x_1, x_2, \ldots, x_n$ of points in $\mathbb{R}^d$, scaling parameters $h_{i, \parallel}$, $h_{i, \perp}$ for every datapoint $x_i$, and $k$ codewords, the \textit{anisotropic vector quantization problem} is finding the $k$ codewords that minimize the objective function 
\begin{align*}
\min_{c_1, \ldots, c_k}\sum_{x_i}\min_{\quant{x}_i \in \{c_1, \ldots, c_k\}} &h_{i, \parallel}\|r_\parallel(x_i, \tilde{x}_i)\|^2 \\
+ &h_{i, \perp}\|r_\perp(x_i, \tilde{x}_i)\|^2.
\end{align*}
\end{definition}

Next we develop an iterative algorithm to optimize the anisotropic vector quantization problem. Similar to Lloyd's algorithm \citep{lloyd1982least}, our algorithm iterate between partition assignment step and codebook update step:

\begin{enumerate}
    \item (Initialization Step) Initialize codewords $c_1, c_2, \ldots, c_k$ to be random datapoints sampled from $x_1 \ldots x_n$.
    \item (Partition Assignment Step) For each datapoint $x_i$ find its codeword $\quant{x_i} = \argmin_{\quant{x}_i \in \{c_1, \ldots, c_k\}} \ell(x_i, \quant{x}_i)$. This can be done by enumerating all $k$ possile choices of codewords.
    \item (Codebook Update Step) For every codeword $c_j$, let $X_j$ be all datapoints $x_i$ such that $\quant{x}_i = c_j$. Update $c_j$ by
    \[c_j \gets \argmin_{c \in \mathbb{R}^d}\sum_{x_i \in X_j} \ell(x_i, c).\]
    \item Repeat Step 2 and Step 3 until convergence to a fixed point or maximum number of iteration is reached.
\end{enumerate}

In each iteration, we need perform update step for each of the codeword. Given a partition of the datapoints $X_j$, we can find the optimal value of the codeword $c_j$ that minimizes the following objective:
\begin{equation}\label{eqn:anisotropic-objective}
    c_j = \argmin_{c \in \mathbb{R}^d}\sum_{x \in X_j} h_{i, \parallel} \|r_\parallel(x_i, c)\|^2 + h_{i, \perp}\|r_\perp(x_i, c)\|^2.
\end{equation}

By setting gradient respect to $c_j$ to zero, we obtain the following update rule:

\begin{theorem}\label{thm:anisotropic-update}
Optimal codeword $c_j$ can be obtained in closed form by solving the optimization problem in Equation \eqref{eqn:anisotropic-objective} for a partition $X_j$. The update rule for the codebook is
$$\begin{aligned}
c_j^* = \Bigg(I&\sum_{x_i\in X_j}h_{i, \perp} + \\
&\sum_{x_i \in X_j} \dfrac{h_{i, \parallel}-h_{i, \perp}}{||x_i||^2}x_ix_i^T\Bigg)^{-1}\sum_{x_i \in X_j}h_{i, \parallel} x_i
\end{aligned}$$
\end{theorem}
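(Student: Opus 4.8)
The plan is to recognize the codebook update \eqref{eqn:anisotropic-objective} as an unconstrained, strictly convex quadratic program in $c$, and to read off its unique minimizer from the first-order optimality condition.

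First I would rewrite the two residual components through the rank-one orthogonal projector $P_i := x_ix_i^T/\|x_i\|^2$ onto $\mathrm{span}(x_i)$: straight from the definitions, $r_\parallel(x_i,c) = P_i(x_i-c)$ and $r_\perp(x_i,c) = (I-P_i)(x_i-c)$. Using that $P_i$ is symmetric and idempotent and that $P_i(I-P_i)=0$, one gets $\|r_\parallel(x_i,c)\|^2 = (x_i-c)^TP_i(x_i-c)$ and $\|r_\perp(x_i,c)\|^2 = (x_i-c)^T(I-P_i)(x_i-c)$; these sum to $\|x_i-c\|^2$, which is the Pythagorean decomposition of Figure~\ref{fig:quant_notation}. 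Hence the per-datapoint loss is the quadratic form $(x_i-c)^TM_i(x_i-c)$ with
\[ M_i := h_{i,\perp}I + (h_{i,\parallel}-h_{i,\perp})P_i = h_{i,\parallel}P_i + h_{i,\perp}(I-P_i), \]
and the objective becomes $F(c)=\sum_{x_i\in X_j}(x_i-c)^TM_i(x_i-c)$.

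Next I would observe that $M_i$ has eigenvalue $h_{i,\parallel}$ along $x_i$ and $h_{i,\perp}$ on the orthogonal complement, so each $M_i$ is positive definite (the weights $h_{i,\parallel},h_{i,\perp}$ arising from Theorem~\ref{thm:general_w} are strictly positive), and therefore $F$ is a strictly convex quadratic whose unique global minimizer is its unique stationary point. Computing $\nabla F(c) = -2\sum_{x_i\in X_j}M_i(x_i-c)$ and setting it to zero gives the normal equations $\big(\sum_{x_i\in X_j}M_i\big)c = \sum_{x_i\in X_j}M_ix_i$. Expanding the left-hand matrix with the first expression for $M_i$ reproduces exactly $I\sum_{x_i\in X_j}h_{i,\perp} + \sum_{x_i\in X_j}\frac{h_{i,\parallel}-h_{i,\perp}}{\|x_i\|^2}x_ix_i^T$, which is positive definite and hence invertible; and since $P_ix_i=x_i$ one has $M_ix_i = h_{i,\parallel}x_i$, so solving the linear system yields the claimed closed form for $c_j^*$.

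This is essentially a generalized-least-squares computation, so I do not expect a genuine obstacle; the steps needing care are (i) the projection bookkeeping, in particular the orthogonality $\langle r_\parallel,r_\perp\rangle=0$ that removes the cross term from $\|r_\perp\|^2$; (ii) justifying invertibility of the aggregated matrix — positive definiteness of each $M_i$ suffices in general, though in a degenerate corner where some $h_{i,\perp}$ vanish one additionally needs the points of $X_j$ to span $\R^d$; and (iii) the right-hand side, where the gradient computation gives $\sum_{x_i\in X_j}M_ix_i = \sum_{x_i\in X_j}h_{i,\parallel}x_i$, which coincides with the displayed $\sum_{x_i\in X_j}h_{i,\perp}x_i$ exactly in the isotropic case $h_{i,\parallel}=h_{i,\perp}$ (where the update collapses to the ordinary $k$-means centroid of $X_j$), so that expression should be reconciled carefully.
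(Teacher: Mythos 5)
Your proof is correct and follows essentially the same route as the paper's: reduce the per-point loss to a quadratic form in $c$, observe convexity, and solve the normal equations. The paper expands $\|r_\parallel\|^2$ and $\|r_\perp\|^2$ term by term rather than working with the projector $P_i$ directly, but the resulting quadratic $\quant{x}^T\bigl(\sum_i (h_{i,\parallel}-h_{i,\perp})\frac{x_ix_i^T}{\|x_i\|^2}+h_{i,\perp}I\bigr)\quant{x}-2\bigl(\sum_i h_{i,\parallel}x_i\bigr)^T\quant{x}$ is identical to yours; your projector bookkeeping is just a cleaner way to get there.

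Your point (iii) is well taken: the right-hand side vector in the displayed theorem statement, $\sum_{x_i\in X_j}h_{i,\perp}x_i$, is indeed a typo. The paper's own proof concludes with $\bigl(\sum_i(h_{i,\parallel}-h_{i,\perp})\frac{x_ix_i^T}{\|x_i\|^2}+h_{i,\perp}I\bigr)^{-1}\bigl(\sum_i h_{i,\parallel}x_i\bigr)$, which agrees with your derivation via $M_ix_i=h_{i,\parallel}x_i$; so the correct weight in the final sum is $h_{i,\parallel}$, not $h_{i,\perp}$. Your caveat on invertibility in the degenerate case $h_{i,\perp}=0$ is also slightly more careful than the paper, which appeals only to $h_{i,\parallel}\ge h_{i,\perp}$ from Theorem~\ref{thm:parallel_greater}.
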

\begin{proof}
See Section \ref{proof:anisotropic-update} of the Appendix for the proof.
\end{proof}

As expected, we see that when all $h_{i, \parallel} = h_{i, \perp}$, our codeword update is equivalent to finding the weighted average of the partition. Furthermore, if $w(t)=1$, the update rule becomes finding the average of datapoints in the partition, same as standard $k$-means update rule. Additionally, since there are only a finite number of partitions and at every iteration the loss function decreases or stays constant, our solution will eventually converge to a fixed point.

\subsection{Product Quantization}

In vector quantization with a dictionary of size $k$, we quantize each datapoint into one of $k$ possible codewords. We can think of this as encoding each datapoint with one dimension with $k$ possible states.

With \textit{product quantization} we encode each datapoint into an $M$ dimensional codeword, each with $k$ possible states. This allows us to represent $k^M$ possible codewords, which would not be scalable with vector quantization. To do this, we split each datapoint $x$ into $M$ subspaces each of dimension $d / M$: $x = (x^{(1)}, x^{(2)}, \ldots, x^{(m)})$. We then create $M$ dictionaries $C^{(1)}, C^{(2)}, \ldots, C^{(m)}$, each with $k$ codewords of dimension $d / M$. Each datapoint would then be encoded with $M$ dimensions, with every dimension taking one of $k$ states.

To calculate distances with product quantization, for every dictionary $C^{(m)}$ we calculate the partial dot product of the relevant subspace of the query with every codeword in the dictionary. The final dot product is obtain by sum up all $M$ partial dot product. We can then calculate the dot product with $m$ quantized datapoints in time $O(kd + m n)$.

Using our anisotropic loss function $\ell(x_i, \quant{x}_i) = h_{i, \parallel} \| r_\parallel(x_i, \quant{x}_i)\|^2 + h_{i, \perp}\|r_\perp(x_i, \quant{x}_i)\|^2$ we obtain a new objective function for product quantization we call the anisotropic product quantization problem.
\begin{definition}
Given a dataset $x_1, x_2, \ldots, x_n$ of points in $\mathbb{R}^d$, a scaling parameter $\eta$, a number $M$ of dictionaries each with elements of size $d / M$ and $k$ codewords in each dictionary, the \textit{anisotropic product quantization problem} is to find the $M$ dictionaries that minimizes
\begin{align*}
\min_{\substack{C^{(m)} \subseteq \mathbb{R}^{d / M} \\ \vert C^{(m)} \vert = k}}\sum_{x_i}\min_{\quant{x}_i \in \prod_m C^{(m)}} &h_{i, \parallel} \|r_\parallel(x_i, \quant{x}_i)\|^2 \\
&+ h_{i, \perp}\|r_\perp(x_i, \quant{x}_i)\|^2.
\end{align*}
\end{definition}

We again consider an iterative algorithm for the problem. We first initialize all quantized datapoints with some element from every dictionary. We then consider the following iterative procedure:
\begin{enumerate}
    \item (Initialization Step) Select a dictionary $C^{(m)}$ by sampling from $\{x^{(m)}_1, \ldots x^{(m)}_n\}$.
    \item (Partition Assignment Step) For each datapoint $x_i$, update $\quant{x}_i$ by using the value of $c \in C^{(m)}$ that minimizes the anisotropic loss of $\quant{x}_i$.
    \item (Codebook Update Step) Optimize the loss function over all codewords in all dictionaries while keeping every dictionaries partitions constant.
    \item Repeat Step 2 and Step 3 until convergence to a fixed point or maximum number of iteration is reached.
\end{enumerate}

We can perform the update step efficiently since once the partitions are fixed the update step minimizes a convex loss, similar to that of vector quantization. We include details in Section~\ref{sec:pq_update} of the Appendix. Additionally, since there are a finite number of partition assignment and at every step the loss function decreases or stays constant, our solution will eventually converge to a fixed point. We note that we can also optionally initialize the codebook by first training the codebook under regular $\ell_2$-reconstruction loss, which speed up training process.

\section{Experiments} \label{sec:experiments}

In this section, we show our proposed quantization objective leads to improved performance on maximum inner product search.
First, we fix the quantization mechanism and compare traditional reconstruction loss with our proposed loss to show that score-aware loss
leads to better retrieval performance and more accurate estimation of maximum inner product values.
Next, we compare in fixed-bit-rate settings against QUIPS and LSQ, which are the current state-of-the-art for many MIPS tasks.
Finally, we analyze the end-to-end MIPS retrieval performance of our algorithm in terms of its speed-recall trade-off in a standardized hardware environment.
We used the benchmark setup from \url{ann-benchmarks.com}, which provides 11 competitive baselines with pre-tuned parameters.
We plot each algorithm's speed-recall curve and show ours achieves the state-of-the-art.

\renewcommand\floatpagefraction{.8}
\begin{figure}
\centering
\begin{subfigure}[b]{\linewidth}
    \centering
    \parbox[b]{.1\linewidth}{
        \subcaption{\label{fig:lambda_sensitivity}}
        \vspace{40mm}
    }\parbox[b]{.9\linewidth}{
        \includegraphics[width=0.95 \linewidth]{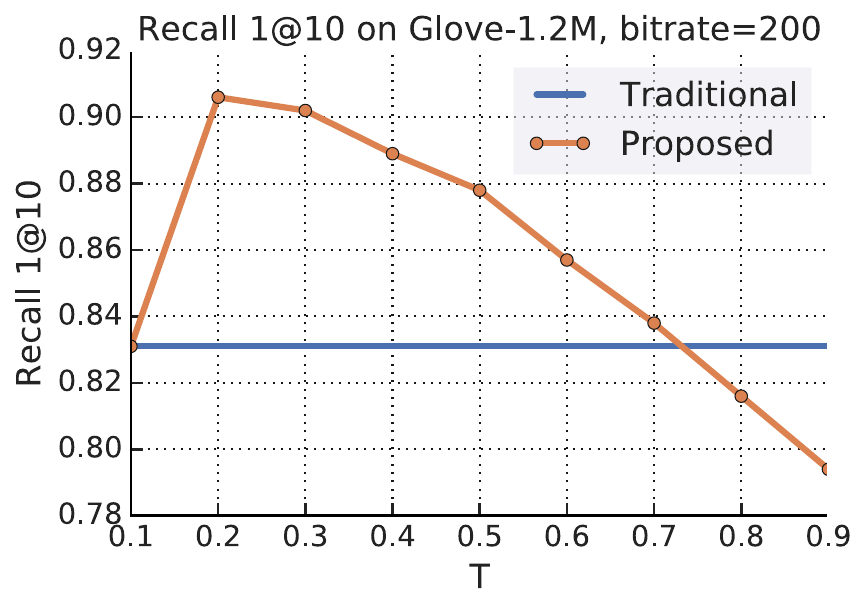}
    }
\end{subfigure}
\begin{subfigure}[b]{\linewidth}
    \centering
    \parbox[b]{.1\linewidth}{
        \subcaption{\label{fig:relative_error}}
        \vspace{40mm}
    }\parbox[b]{.9\linewidth}{
        \includegraphics[width=0.95\linewidth]{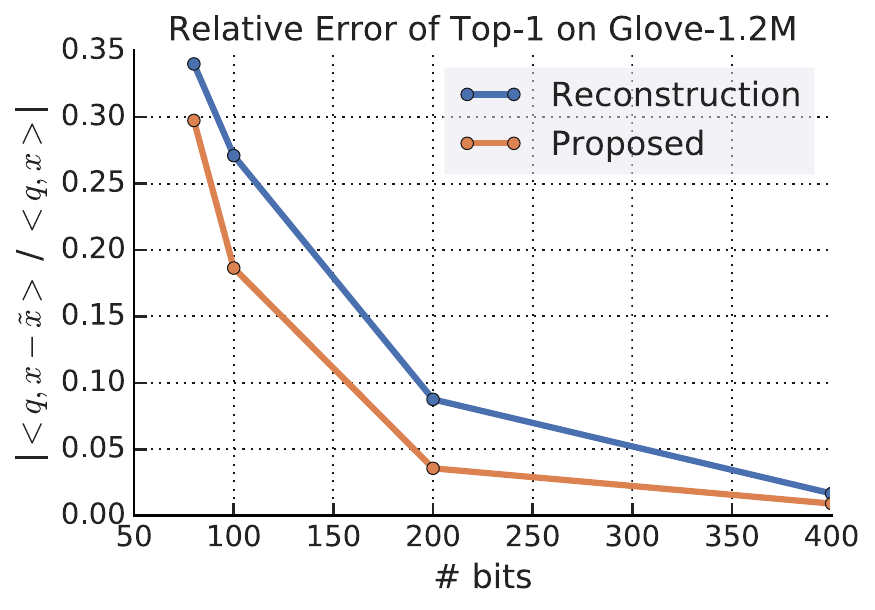}
    }
\end{subfigure}
\caption{(a) The retrieval Recall1@10 for different values of the threshold $T$. We see that for $T = 0.2$ (corresponding to $\eta = 4.125$) our proposed score-aware quantization loss achieves significantly better Recall than traditional reconstruction loss. (b) The relative error of inner product estimation for true Top-1 on \texttt{Glove1.2M} dataset across multiple number of bits settings. We see that our proposed score-aware quantization loss reduces the relative error compared to reconstruction loss.}
\end{figure}

\begin{figure*}[ht]
\begin{minipage}{0.4 \textwidth}
\includegraphics[width=0.94 \linewidth]{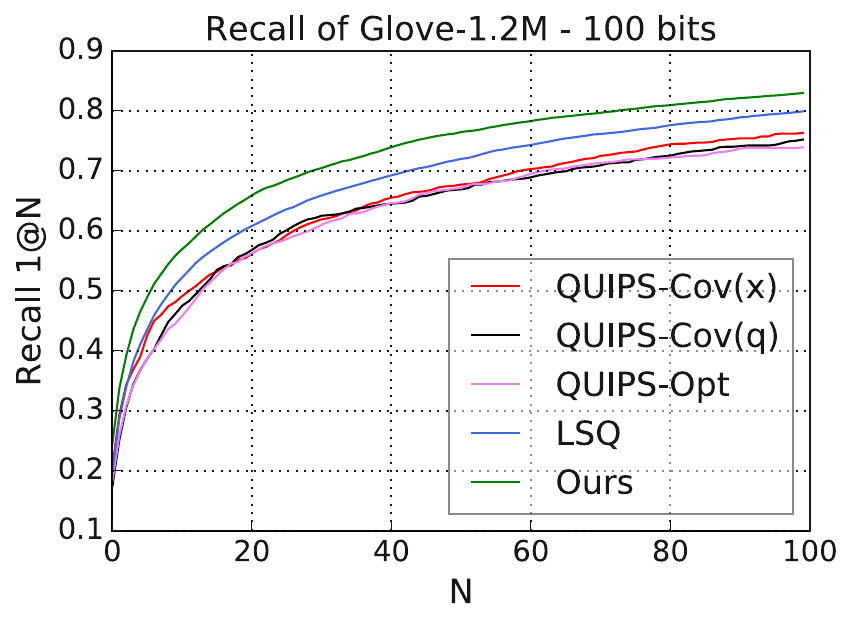} \\
\includegraphics[width=0.94 \linewidth]{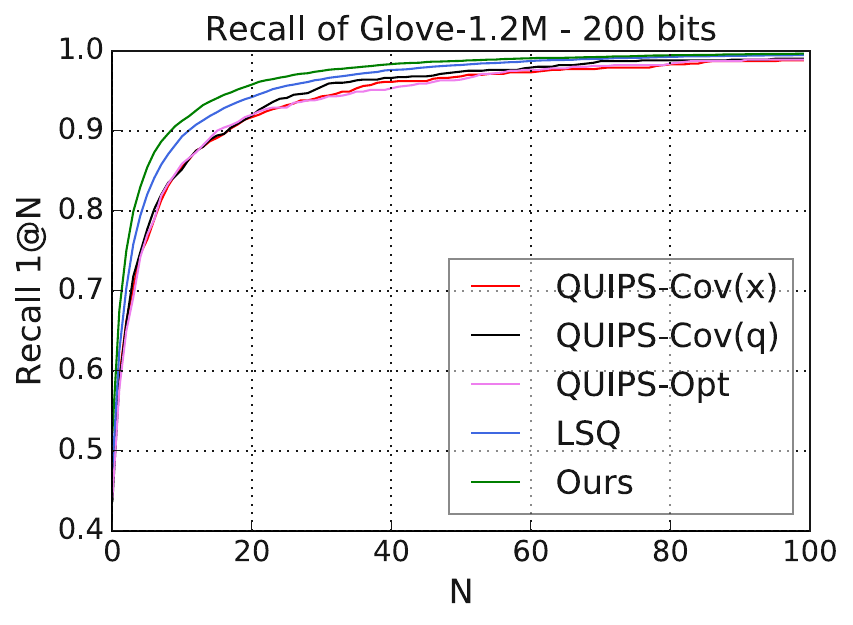}

\centering
\subcaption{\label{fig:fixedbitrate} MIPS recall on \texttt{Glove1.2M}.}
\end{minipage}
\begin{minipage}{0.56 \textwidth}
\centering
\includegraphics[width= \linewidth]{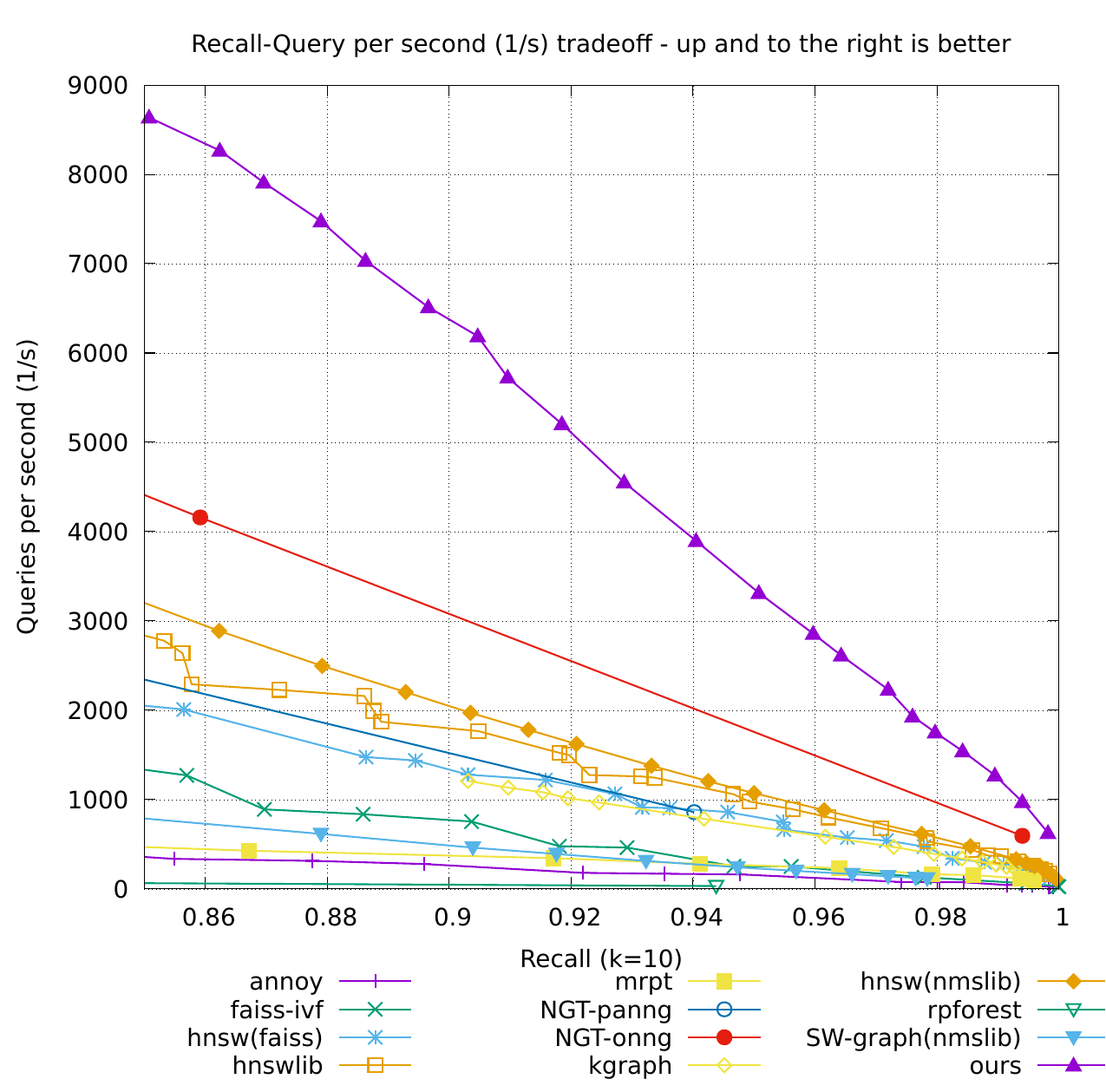}
\subcaption{\label{fig:ann-benchmarks}Speed-recall trade-off on \texttt{Glove1.2M} Recall 10@10.}
\end{minipage}

\caption{\label{fig:recall} (a) Recall 1@N curve on \texttt{Glove1.2M} comparing with variants of QUIPS~\cite{QUIPS} and LSQ \cite{LSQ} on MIPS tasks. We see that our method improves over all of these methods. (b) Recall-Speed benchmark with 11 baselines from~\cite{aumuller2019ann} on \texttt{Glove1.2M}. The parameters of each baseline are pre-tuned and released on: \url{http://ann-benchmarks.com/}. We see that our approach is the fastest in the high recall regime.}
\end{figure*}

\subsection{Direct comparison with reconstruction loss}
We compare our proposed score-aware quantization loss with the traditional reconstruction loss by fixing all parameters other than the loss function in the following experiments.

We use \texttt{Glove1.2M} which is a collection of 1.2 million 100-dimensional word embeddings trained as described in \cite{PenningtonGlove}. See Section~\ref{sec:idiosyncrasy} of the Appendix for our rationale for choosing this dataset. For all experiments we choose $w(t)=\mathbf{I}(t\ge T)$. The Glove dataset is meant to be used with a cosine distance similarity metric, while our algorithm is designed for the more general MIPS task. MIPS is equivalent to cosine similarity search when all datapoints are equal-norm, so we adopt our technique to cosine similarity search by unit-normalizing all datapoints at training time.

We first compare the two losses by their Recall1@10 when used for product quantization on \texttt{Glove1.2M}, as shown in Figure.~\ref{fig:lambda_sensitivity}. We learn a dictionary by optimizing product quantization with reconstruction loss. We then quantize datapoints two ways, first by minimizing reconstruction loss and then by minimizing score-aware loss. We see that score-aware quantization loss achieves significant recall gains as long as $T$ is chosen reasonably. For all subsequent experiments, we set $T = 0.2$, which by the limit in Equation \eqref{eq:eta-limit} corresponds to a value of $\eta = 4.125$.

Next we look at the accuracy of the estimated top-1 inner product as measured by relative error: $|\frac{\inner{q, x} - \inner{q, \quant{x}}}{\inner{q, x}}|$. This is important in application scenarios where an accurate estimate of $\inner{q, x}$ is needed, such as softmax approximation, where the inner product values are often logits later used to compute probabilities. One direct consequence of score-aware loss functions is that the objective weighs pairs by their importance and thus leads to lower estimation error on top-ranking pairs. We see in Figure.~\ref{fig:relative_error} that our score-aware loss leads to smaller relative error over all bitrate settings.

Datasets other than Glove demonstrate similar performance gains from score-aware quantization loss. See Section~\ref{sec:amazon} of the Appendix for results on the \texttt{Amazon-670k} extreme classification dataset.

\subsection{Maximum inner product search retrieval}
Next, we compare our MIPS retrieval performance against other quantization techniques at equal bitrate.
We compare to LSQ~\cite{LSQ} and all three variants of QUIPS~\cite{QUIPS}. In Figure~\ref{fig:fixedbitrate} we measure the performance at fixed bitrates of 100 and 200 bits per datapoint. Our metric is Recall 1@N, which corresponds to the proportion of queries where the top $N$ retrieved results contain the true top-1 datapoint. Our algorithm using score-aware loss outperforms other algorithms at both bitrates and all ranges of $N$. 

Other quantization methods may also benefit from using score-aware quantization loss. For example, binary quantization techniques such as~\cite{stochastic_hashing} use reconstruction loss in their original paper, but can be easily adapted to the proposed loss by a one line change to the loss objective.  We show results which illustrate the improvement of such a change in Section~\ref{sec:binary_quantization} of Appendix.

\subsection{Recall-Speed benchmark} \label{sec:speed_benchmark}
Fixed-bit-rate experiments mostly compare asymptotic behavior and often overlook preprocessing overhead such as learned rotation or lookup table computation, which can be substantial. To evaluate effectiveness of MIPS algorithms in a realistic setting, it is important to perform end-to-end benchmarks and compare speed-recall curves. We adopted the methodology of public benchmark ANN-Benchmarks~\cite{aumuller2019ann}, which plots a comprehensive set of 11 algorithms for comparison, including \texttt{faiss}~\cite{FAISS} and \texttt{hnswlib}~\cite{hnsw}.

Our benchmarks are all conducted on an Intel Xeon W-2135 with a single CPU thread, and followed the benchmark's protocol. Our implementation builds on product quantization with the proposed quantization and SIMD based ADC~\cite{QUIPS} for distance computation. This is further combined with a vector quantization based tree~\cite{MSQ}. Our implementation is open-source and available at \url{https://github.com/google-research/google-research/tree/master/scann} and furthermore the exact configurations used to produce our benchmark numbers are part of the ANN-Benchmarks GitHub repository. Figure~\ref{fig:ann-benchmarks} shows our performance on \texttt{Glove1.2M} significantly outperforms competing methods in the high-recall region.

\section{Conclusion}

In this paper, we propose a new quantization loss function for inner product search, which replaces traditional reconstruction error. The new loss function is weighted based on the inner product values, giving more weight to the pairs of query and database points with higher inner product values. The proposed loss function is theoretically proven and can be applied to a wide range of quantization methods, for example product and binary quantization. Our experiments show superior performance on retrieval recall and inner product value estimation compared to methods that use reconstruction error. The speed-recall benchmark on public datasets further indicates that the proposed method outperforms state-of-the-art baselines which are known to be hard to beat.

\FloatBarrier
\bibliographystyle{icml2020}
\bibliography{reference}

\clearpage
\onecolumn
\section{Appendix} \label{sec:appendix}
\subsection{Proof of Theorem \ref{thm:general_w}}\label{proof:general_w}
We first prove the following lemma:
\begin{lemma}
Suppose we are given a datapoint $x$ and its quantization $\tilde{x}$. If $q$ is uniformly spherically distributed, then

$$\mathbb{E}_q[\inner{q, x-\tilde{x}}^2 | \inner{q,x}=t] = \dfrac{t^2}{||x||^2}||r_\parallel(x,\tilde{x})||^2 + \dfrac{1-\frac{t^2}{||x||^2}}{d-1}||r_\perp(x,\tilde{x})||^2$$

with $r_\parallel$ and $r_\perp$ defined as in section \ref{sec:analyze_general_w}.
\end{lemma}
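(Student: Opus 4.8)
The plan is to decompose the residual $r := x - \quant{x}$ as $r = r_\parallel(x,\quant{x}) + r_\perp(x,\quant{x})$ (the parallel and orthogonal parts from Section~\ref{sec:analyze_general_w}), expand
$$\inner{q, x-\quant{x}}^2 = \inner{q, r_\parallel}^2 + 2\inner{q, r_\parallel}\inner{q, r_\perp} + \inner{q, r_\perp}^2,$$
and take the conditional expectation $\mathbb{E}_q[\,\cdot\mid\inner{q,x}=t]$ of each of the three terms. The structural ingredient I would use is the standard decomposition of a uniformly random unit vector: with $\hat{x} := x/\|x\|$, write $q = \inner{q,\hat{x}}\,\hat{x} + q_\perp$, where $q_\perp$ lies in the $(d-1)$-dimensional subspace $\hat{x}^\perp$. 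Conditioned on $\inner{q,x}=t$ (equivalently $\inner{q,\hat{x}}=t/\|x\|$), the vector $q_\perp$ has fixed norm $\rho := \sqrt{1 - t^2/\|x\|^2}$ and is \emph{uniformly distributed on the sphere of radius $\rho$ inside $\hat{x}^\perp$}. Establishing this last fact cleanly is where I expect the only real subtlety, since $\{\inner{q,x}=t\}$ has measure zero: I would justify it via the rotational symmetry of the uniform law about the axis $\hat{x}$ — the conditional distribution must be invariant under all rotations fixing $\hat{x}$, and a fixed-norm random vector with that invariance is uniform on the corresponding sphere — or, equivalently, via the explicit parametrization $q = (\cos\phi)\hat{x} + (\sin\phi)u$ with $u$ uniform on the unit sphere of $\hat{x}^\perp$ and $\phi$ drawn independently from the density $\propto \sin^{d-2}\phi$.

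Granting that, the three terms are routine. \textbf{Parallel term:} since $r_\parallel = \frac{\inner{r,x}}{\|x\|^2}x$ is a fixed multiple of $x$, on the event $\inner{q,x}=t$ we get $\inner{q,r_\parallel} = \frac{\inner{r,x}}{\|x\|^2}t$, a constant; squaring and using $\|r_\parallel\| = |\inner{r,x}|/\|x\|$ gives $\inner{q,r_\parallel}^2 = \frac{t^2}{\|x\|^2}\|r_\parallel\|^2$. \textbf{Cross term:} $\inner{q,r_\parallel}$ is the same fixed scalar, while $\inner{q,r_\perp} = \inner{q_\perp,r_\perp}$ because $r_\perp \in \hat{x}^\perp$; since the conditional law of $q_\perp$ is symmetric about the origin, $\mathbb{E}[q_\perp\mid\inner{q,x}=t]=0$, so this term has zero conditional expectation. \textbf{Orthogonal term:} again $\inner{q,r_\perp}=\inner{q_\perp,r_\perp}$; writing $q_\perp = \rho\,u$ with $u$ uniform on the unit sphere of the $(d-1)$-dimensional space $\hat{x}^\perp$, the elementary identity $\mathbb{E}[u u^T] = \frac{1}{d-1}I$ (restricted to that subspace) gives $\mathbb{E}[\inner{u,r_\perp}^2] = \frac{\|r_\perp\|^2}{d-1}$, hence $\mathbb{E}[\inner{q,r_\perp}^2\mid\inner{q,x}=t] = \frac{\rho^2}{d-1}\|r_\perp\|^2 = \frac{1 - t^2/\|x\|^2}{d-1}\|r_\perp\|^2$.

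Summing the three contributions gives exactly the claimed identity. Everything except the identification of the conditional law of $q$ is bookkeeping with the orthogonal decomposition plus the two standard spherical-average facts $\mathbb{E}[q_\perp\mid\inner{q,x}=t]=0$ and $\mathbb{E}[uu^T]=\frac{1}{d-1}I$. With the lemma in hand, Theorem~\ref{thm:general_w} follows by writing $\mathbb{E}_q[w(\inner{q,x})\inner{q,x-\quant{x}}^2] = \mathbb{E}_t\big[w(t)\,\mathbb{E}_q[\inner{q,x-\quant{x}}^2\mid\inner{q,x}=t]\big]$, substituting $t = \|x\|\cos\theta$ (so $\theta$ has density $\propto \sin^{d-2}\theta$ on $[0,\pi]$), and simplifying $\cos^2\theta\,\sin^{d-2}\theta = \sin^{d-2}\theta - \sin^d\theta$, from which the coefficients $h_\parallel$ and $h_\perp$ can be read off.
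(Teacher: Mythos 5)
Your proposal is correct and follows essentially the same route as the paper's proof: decompose $q$ into its components parallel and orthogonal to $x$, observe that the cross term vanishes by symmetry of the conditional law of $q_\perp$, compute the parallel term as a deterministic quantity given $\inner{q,x}=t$, and evaluate the orthogonal term via the uniform distribution of $q_\perp$ on the radius-$\sqrt{1-t^2/\|x\|^2}$ sphere in the $(d-1)$-dimensional subspace $x^\perp$ using $\mathbb{E}[uu^T]=\frac{1}{d-1}I$. Your write-up is in fact slightly more careful than the paper's, since you explicitly justify the identification of the conditional distribution (which the paper asserts without comment), but there is no substantive difference in approach.
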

\begin{proof}
First, we can decompose $q :=q_{\parallel}+q_{\perp}$ with $q_{\parallel} :=\inner{q,x} \cdot \frac{x}{||x||}$ and  $q_{\perp} :=q - q_{\parallel}$ where $q_{\parallel}$ is parallel to $x$ and $q_{\perp}$ is orthogonal to $x$. Then, we have

\begin{align}
\mathbb{E}_{q} [\inner{q,  x - \quant{x}} ^2 | \inner{q, x}=t] &=  \mathbb{E}_q [\inner{  q_{\parallel}+q_{\perp}, r_{\parallel}(x, \quant{x})+r_{\perp}(x, \quant{x})} ^2 | \inner{q, x}=t] \nonumber \\
&=  \mathbb{E}_q [(\inner{q_{\parallel}, r_{\parallel}(x, \quant{x})}  + \inner{q_{\perp},r_{\perp}(x, \quant{x})} )^2 | \inner{q, x}=t] \nonumber \\
&=  \mathbb{E}_q [\inner{q_{\parallel}, r_{\parallel}(x, \quant{x})}^2 | \inner{q, x}=t] + \mathbb{E}_q [\inner{q_{\perp}, r_{\perp}(x, \quant{x})}^2 | \inner{q, x}=t],\label{eq:expectation_derive}
\end{align}

The last step uses the fact that $\mathbb{E}_q [\inner{q_{\parallel}, r_{\parallel}(x, \quant{x})}  \inner{q_{\perp},r_{\perp}(x, \quant{x})}  | \inner{q, x}=t] = 0$ due to symmetry. The first term of \eqref{eq:expectation_derive}, $\mathbb{E}_q [\inner{q_{\parallel}, r_{\parallel}(x, \quant{x})}^2 | \inner{q, x}=t] = \|r_{\parallel}(x, \quant{x})\|^2 \mathbb{E}_q [\|q_{\parallel}\|^2 | \inner{q, x}=t] = \frac{\|r_{\parallel}\|^2 t^2}{\|x\|^2}$. 
For the second term, since $q_{\perp}$ is uniformly distributed in the $(d-1)$ dimensional subspace orthogonal to $x$ with the norm $\sqrt{1-\frac{t^2}{\|x\|^2}}$, we have $\mathbb{E}_q [\inner{q_{\perp}, r_{\perp}(x, \quant{x})}^2 | \inner{q, x}=t] = \frac{1-\frac{t^2}{\|x\|^2}}{d-1} ||r_{\perp}(x, \quant{x})||^2$. Therefore

$$\mathbb{E}_{q} [\inner{q, r(x, \quant{x})} ^2 | \inner{q, x}=t] = \frac{t^2}{\|x\|^2} ||r_{\parallel}(x, \quant{x})||^2+\frac{1-\frac{t^2}{\|x\|^2}}{d-1} ||r_{\perp}(x, \quant{x})||^2.$$
\end{proof}
\begin{proof}[Proof of Theorem \ref{thm:general_w}]
We can expand $\ell(x_i,\tilde{x}_i, w)$ as

$$\int_{-||x_i||}^{||x_i||} w(t)\mathbb{E}_q[\inner{q,x_i-\tilde{x}_i}^2 | \inner{q,x_i}=t] d\text{P}(\inner{q,x_i}\le t)$$

Let $\theta :=\arccos\dfrac{t}{||x_i||}$ so $t=||x_i||\cos\theta$. Because we are assuming $q$ is uniformly spherically distributed, $\frac{d\text{P}(\inner{q,x} \le t)}{d t}$ is proportional to the surface area of $(d-1)$-dimensional hypersphere with a radius of $\sin \theta$. Thus we have $\frac{d\text{P}(\inner{q,x}=t)}{dt} \propto S_{d-1} \sin^{d-2}\theta$, where $S_{d-1}$ is the surface area of $(d-1)$-sphere with unit radius. Our integral can therefore be written as:

$$\int_0^\pi w(||x_i||\cos\theta)\mathbb{E}_q[\inner{q,x_i-\tilde{x}_i}^2 | \inner{q,x_i}=||x_i||\cos\theta] \sin^{d-2}\theta d\theta.$$

Using our above lemma this simplifies to

$$\int_0^\pi w(||x_i||\cos\theta)\left(\cos^2\theta ||r_{\parallel}(x, \quant{x})||^2+\frac{\sin^2\theta}{d-1} ||r_{\perp}(x, \quant{x})||^2\right) \sin^{d-2}\theta d\theta.$$

From here we can clearly see that

$$\begin{aligned}
\ell(x_i,\tilde{x}_i, w) &= h_\parallel(w, ||x_i||)||r_\parallel(x_i,\tilde{x}_i)||^2+h_\perp(w, ||x_i||)||r_\perp(x_i,\tilde{x}_i)||^2,\\
h_\parallel &:=\int_0^\pi w(||x_i||\cos\theta)(\sin^{d-2}\theta-\sin^d\theta)d\theta,\\
h_\perp &:=\dfrac{1}{d-1}\int_0^\pi w(||x_i||\cos\theta)\sin^d\theta d\theta
\end{aligned}$$

as desired.
\end{proof}

\subsection{Proof of Theorem \ref{thm:parallel_greater}}\label{proof:parallel_greater}
\begin{proof}[Proof of Theorem \ref{thm:parallel_greater}]
Note that $h_\parallel$ and $h_\perp$ equal zero if and only if $w(t)=0$ for $t\in[-||x_i||, ||x_i||]$. Otherwise both quantities are strictly positive so it is equivalent to prove that $\dfrac{h_\parallel(w, ||x_i||)}{h_\perp(w, ||x_i||)}\ge1$ with equality if and only if $w$ is constant.

$$\begin{aligned}
\dfrac{h_\parallel(w, ||x_i||)}{h_\perp(w, ||x_i||)} &= \dfrac{\displaystyle\int_0^\pi w(||x_i||\cos\theta)(\sin^{d-2}\theta-\sin^d\theta)d\theta}{\dfrac{1}{d-1}\displaystyle\int_0^\pi w(||x_i||\cos\theta)\sin^d\theta d\theta} \\
&= (d-1)\left(\dfrac{\int_0^\pi w(||x_i||\cos\theta)\sin^{d-2}\theta d\theta}{\int_0^\pi w(||x_i||\cos\theta)\sin^d\theta d\theta}-1\right)
\end{aligned}$$

Define $I_d:=\int_0^\pi w(||x_i||\cos\theta)\sin^d\theta d\theta$. Our objective is to prove $(d-1)\left(\dfrac{I_{d-2}}{I_d}-1\right)\ge1$ or equivalently $\dfrac{I_{d-2}}{I_d}\ge\dfrac{d}{d-1}$. To do this we use integration by parts on $I_d$:

$$\begin{aligned}
I_d=&-w(||x_i||\cos\theta)\cos\theta\sin^{d-1}\theta\Big|_0^\pi+\\
&\int_0^\pi \cos\theta\left[w(||x_i||\cos\theta)(d-1)\sin^{d-2}\theta\cos\theta-w'(||x_i||\cos\theta)||x_i||\sin^d\theta\right] d\theta\\
=&(d-1)\int_0^\pi w(||x_i||\cos\theta)\cos^2\theta\sin^{d-2}\theta-||x_i||\int_0^\pi w'(||x_i||\cos\theta)\cos\theta\sin^d\theta d\theta\\
=&(d-1)I_{d-2}-(d-1)I_d-||x_i||\int_0^\pi w'(||x_i||\cos\theta)\cos\theta\sin^d\theta d\theta \label{eq:parallel_greater_proof_eq1}
\end{aligned}$$

We now show that $\int_0^\pi w'(||x_i||\cos\theta)\cos\theta\sin^d\theta d\theta\ge0$ with equality if and only if $w$ is constant. As a prerequisite for this theorem $w(t)=0$ for $t<0$ so our integral simplifies to $\int_0^{\pi/2}w'(||x_i||\cos\theta)\cos\theta\sin^d\theta d\theta\ge0$. From 0 to $\pi/2$ both sine and cosine are non-negative. Since $w$ is non-decreasing in this range, $w'\ge0$ and therefore our integral is non-negative. The integral equals zero if and only if $w'=0$ over the entire range of $t$ which implies $w$ is constant.

Applying our inequality to equation \ref{eq:parallel_greater_proof_eq1} we get $\dfrac{I_{d-2}}{I_d}\ge\dfrac{d}{d-1}$ as desired.

\end{proof}

\subsection{Proof of Results for $w(t)=\mathbf{I}(t\ge T)$}\label{proof:normalized-dataset-results}
\begin{proof}[Proof of Equation \ref{eq:eta-limit}]
Let $\alpha:=\arccos(T/||x_i||)$. If we do the same analysis as section \ref{proof:parallel_greater} but specialized for $w(t)=\mathbf{I}(t\ge T)$ we find that $I_d=\int_0^\alpha \sin^d\theta d\theta$ and

$$dI_d=(d-1)I_{d-2}-\cos\alpha\sin^{d-1}\alpha.\label{eq:special_w_recursion}$$

From the Cauchy–Schwarz inequality for integrals, we have

\[
\Big(\int_0^\alpha{\sin^{\frac{d+2}{2}}\theta \sin^{\frac{d-2}{2}}\theta d\theta}\Big)^2 \le \int_0^\alpha{\sin^{d+2}}\theta d\theta \int_0^\alpha{\sin^{d-2}\theta d\theta}
\]

Rearranging this we have $\frac{I_d}{I_{d+2}} \le \frac{I_{d-2}}{I_d}$, which proves that $\frac{I_{d-2}}{I_d}$ is monotonically non-increasing as $d$ increases. From section \ref{proof:parallel_greater} we already have a lower bound $\frac{I_{d-2}}{I_d}>1$. Since the ratio is monotonically non-increasing, $\lim_{d\to\infty}\frac{I_d}{I_{d+2}}$ exists. 

Dividing both sides of equation~\ref{eq:special_w_recursion} by $dI_d$, we have
\[
1=\frac{-\cos \alpha \sin^{d-1} \alpha}{d I_d} + \frac{(d-1) I_{d-2}}{d I_d} 
\]

Using our above analysis we know that $\displaystyle\lim_{d\to\infty}\frac{(d-1)I_{d-2}}{dI_d}$ exists so therefore $\displaystyle  \lim_{d\to \infty} \frac{\cos \alpha \sin^{d-1} \alpha}{d I_d} > 0$ also exists. Furthermore,
\[
\lim_{d\to \infty} \frac{\frac{\cos \alpha \sin^{d-1} \alpha}{d I_d}}{\frac{\cos \alpha \sin^{d-3} \alpha}{(d-2) I_{d-2}}}=1 \Rightarrow
\lim_{d\to \infty} \frac{(d-2) I_{d-2}}{dI_d} = \frac{1}{\sin^2 \alpha}
\]
Finally we have $\displaystyle \lim_{d \to \infty} \frac{\eta(\mathbf{I}(t\ge T), ||x_i||)}{d-1}=\frac{1}{\sin^2 \alpha} - 1=\frac{(T/||x_i||)^2}{1-(T/||x_i||)^2}$, and this proves equation~\ref{eq:eta-limit}.
\end{proof}

\subsection{Proof of Theorem \ref{thm:anisotropic-update}}
\label{proof:anisotropic-update}
 \begin{proof}[Proof of Theorem \ref{thm:anisotropic-update}]

Consider a single point $x_i$ with $r_\parallel := r_\parallel(x_i, \quant{x}_i) = \frac{1}{\|x\|^2}x_i x_i^T (x_i - \quant{x}_i)$ and $r_\perp := r_\perp(x_i, \quant{x}_i) = x_i - \quant{x}_i - r_\parallel$. We have that
\begin{align}
\|r_\perp\|^2 &= (x_i - \quant{x}_i - r_\parallel)^T(x_i - \quant{x}_i - r_\parallel) \notag \\
&= \|x_i\|^2 + \|\quant{x}_i\|^2 - 2 x_i^T \quant{x}_i - 2 r_\parallel^T(x_i - \quant{x}_i) + \|r_\parallel\|^2 \notag\\
&= \|x_i\|^2 + \|\quant{x}_i\|^2 - 2 x_i^T \quant{x}_i - \|r_\parallel\|^2 \label{r_perp_norm},
\end{align}
where we use the fact that $x_i - \quant{x}_i = r_\parallel + r_\perp$ and $r_\parallel$ is orthogonal to $r_\perp$.

We also have
\begin{align}
    \|r_\parallel\|^2 &= \frac{1}{\|x_i\|^4} \left ( x_i(x - \quant{x}_i)^T x_i \right)^T \left ( x_i(x - \quant{x}_i)^T x_i \right) \notag \\
    &= \frac{1}{\|x_i\|^4} x_i^T (x_i - \quant{x}_i)x_i^T x_i (x_i - \quant{x}_i)^T x_i \notag\\
    &= \frac{1}{\|x_i\|^2} x_i^T (x_i - \quant{x}_i)(x_i - \quant{x}_i)^T x_i \notag \\
    &= \|x_i\|^2  + \frac{\quant{x}_i^T x_i x_i^T \quant{x}_i}{\|x_i\|^2} - 2 x_i^T \quant{x}_i.\label{r_parallel_norm}
\end{align}

Combining Equations \eqref{r_perp_norm} and \eqref{r_parallel_norm}, we have that
\[
h_{i, \parallel} \|r_\parallel\|^2 + h_{i, \perp} \|r_\perp\|^2 = \quant{x}_i^T\left((h_{i, \parallel} - h_{i, \perp})\frac{x_i x_i^T}{\|x_i\|^2} + h_{i, \perp} I\right)\quant{x}_i - 2 h_{i, \parallel}x_i^T \quant{x}_i + h_{i, \parallel}\|x_i\|^2.
\]

Ignoring the constant term  $h_{i, \parallel}\|x_i\|^2$ and summing over all datapoints $x_i$ that have $\quant{x}$ as a center, we have that the total loss is equivalent to
\begin{equation}\label{single-point-loss}
\quant{x}^T \left (\sum_i  (h_{i, \parallel} - h_{i, \perp})\frac{x_i x_i^T}{\|x_i\|^2} + h_{i, \perp} I\right)\quant{x} - 2 \left(\sum_i h_{i, \parallel}x_i \right)^T \quant{x}.
\end{equation}

Since we established in Theorem \ref{thm:parallel_greater} that $h_{i, \parallel} \geq h_{i, \perp}$, we have that the loss function is a convex quadratic function and thus we can calculate the optimal value of $\quant{x}$ as
\[
\quant{x} = \left (\sum_i  (h_{i, \parallel} - h_{i, \perp})\frac{x_i x_i^T}{\|x_i\|^2} + h_{i, \perp} I\right)^{-1}\left(\sum_i h_{i, \parallel}x_i \right).
\]

\end{proof}

\subsection{Codebook Optimization in Product Quantization}
\label{sec:pq_update}

Let $c$ be a vector with all dictionary codewords. We can get a quantized point $\quant{x}_i$ by calculating $B c$, where $B$ is a $\{0, 1\}$-matrix with dimensions $d \times dk$ that selects the relevant codewords.

For example, suppose $\{(-1, -1), (1, 1)\}$ are our codewords for the first two dimensions and $\{(-2, -2), (2, 2)\}$ are our codewords for the next two dimensions. We have our vectorized dictionary $c = (-1, -1, 1, 1, -2, -2, 2, 2)$. If we want to represent $(-1, -1, 2, 2)$, we set $B$ to be
\[
\begin{pmatrix}
1 & 0 & 0 & 0 & 0 & 0 & 0 & 0 \\
0 & 1 & 0 & 0 & 0 & 0 & 0 & 0 \\
0 & 0 & 0 & 0 & 0 & 0 & 1 & 0 \\
0 & 0 & 0 & 0 & 0 & 0 & 0 & 1
\end{pmatrix}.
\]
Similarly, if we want to represent $(1, 1, -2, -2)$ we set $B$ to be
\[
\begin{pmatrix}
0 & 0 & 1 & 0 & 0 & 0 & 0 & 0 \\
0 & 0 & 0 & 1 & 0 & 0 & 0 & 0 \\
0 & 0 & 0 & 0 & 1 & 0 & 0 & 0 \\
0 & 0 & 0 & 0 & 0 & 1 & 0 & 0
\end{pmatrix}.
\]

We can now write $\quant{x}_i$ as $\quant{x}_i = B_i c$ for some matrix $B_i$.

To minimize our loss function over $c$, we start by summing over Equation \ref{single-point-loss} and ignoring all constant terms to get
\[
c^T \left ( \sum_i B_i^T\left ((h_{i, \parallel} - h_{i, \perp})\frac{x_i x_i^T}{\|x_i\|^2} + h_{i, \perp} I\right) B_i\right )c - 2 \left (  \sum_i h_{i, \parallel}B_ix_i \right)c.
\]

This is again a convex quadratic minimization problem over $c$ and can be solved efficiently. Specifically the matrix
\[
\sum_i B_i^T\left ((h_{i, \parallel} - h_{i, \perp})\frac{x_i x_i^T}{\|x_i\|^2} + h_{i, \perp} I\right)B_i
\]
will be full rank if we observe every codeword at least once. We can then find the optimal value of $c$ with
\[
c = \left ( \sum_i B_i^T\left ((h_{i, \parallel} - h_{i, \perp})\frac{x_i x_i^T}{\|x_i\|^2} + h_{i, \perp} I\right)B_i\right)^{-1}\left (  \sum_i h_{i, \parallel}B_ix_i \right).
\]

\subsection{Results on the \texttt{Amazon-670k} Extreme Classification Dataset}
\label{sec:amazon}
Extreme classification with a large number of classes requires evaluating the last layer (classification layer) with all possible classes. When there are $\mathcal{O}(M)$ classes, this becomes a major computation bottleneck as it involves a huge matrix multiplication followed by Top-K. Thus this is often solved using Maximum Inner Product Search to accelerate inference.
We evaluate our methods on extreme classification using the \texttt{Amazon-670k} dataset~\cite{amazon670}. An MLP classifier is trained over 670,091 classes, where the last layer has a dimensionality of 1,024. The retrieval performance of product quantization with traditional reconstruction loss and with score-aware quantization loss are compared in Table~\ref{fig:amazon-table}.

\begin{table}
\centering
\begin{tabular}{|l|l|l|l||l||l|l|l|l|}
\hline
Bitrate            & 1@1            & 1@10               & 1@100         & Bitrate            & 1@1            & 1@10               & 1@100  \\ \hline
\hhline{|========|}
256 bits, PQ   & 0.652          & 0.995                 & 0.999    & 512 bits, PQ         & 0.737          & 0.998             & 1.000      \\ \hline     
256 bits, Ours & \textbf{0.656} & \textbf{0.996} & \textbf{1.000}  & 512 bits, Ours & \textbf{0.744} & 0.997  & 1.000    \\ \hline
\hhline{|========|}
1024 bits, PQ         & 0.778          & 1.000             & 1.000 & 2048 bits, PQ         & 0.782          & 1.000             & 1.000      \\ \hline     
1024 bits, Ours & \textbf{0.812} & 1.000  & 1.000  & 2048 bits, Ours         & \textbf{0.875}          & 1.000             & 1.000  \\ \hline
\end{tabular}
\caption{\texttt{Amazon-670k} extreme classification performance. The benefits of anisotropic vector quantization on Recall 1@$N$are especially evident at lower bitrates and lower $N$.}\label{fig:amazon-table}
\end{table}

\subsection{Results on Binary Quantization}
\label{sec:binary_quantization}
Another popular family of quantization function is binary quantization. In such a setting, a function $h(x): \R^d \rightarrow \{0, 1\}^{h}$ is learned to quantize datapoints into binary codes, which saves storage space and can speed up distance computation. There are many possible ways to design such a binary quantization function, and some~\cite{carreira2015hashing,stochastic_hashing} uses reconstruction loss. 

We can apply our score-aware quantization loss to these approaches. We follow the setting of Stochastic Generative Hashing (SGH)~\cite{stochastic_hashing}, which explicitly minimizes reconstruction loss and has been shown to outperform earlier baselines. In their paper, a binary auto-encoder is learned to quantize and dequantize binary codes:
\begin{equation*}
\tilde x = g(h(x)); \text{where~} h(x) \in \{0, 1\}^{h}
\end{equation*}
where $h(\cdot)$ is the ``encoder'' part which binarizes original datapoint into binary space and $g(\cdot)$ is the ``decoder'' part which reconstructs the datapoints given the binary codes. The authors of the paper uses $h(x) = sign(W_h^T x+ b_h)$ as the encoder function and $g(h) = W_g^T h$ as the decoder functions. The learning objective is to minimize the reconstruction error of $||x-\tilde x||^2$, and the weights in the encoder and decoder are optimized end-to-end using standard stochastic gradient descent. We can instead use our score-aware quantization loss. We show below the results of SGH and SGH with our score-aware quantization loss in Table \ref{sgh-score-aware} on the \texttt{SIFT1M} dataset \citep{PQ}. We see that adding our score-aware quantization loss greatly improves performance.

\begin{table}[h]
\centering
\begin{tabular}{|l|l|l|l|l|}
\hline
Recall $k@k$    & 1@1            & 1@10           & 10@10     & 10@100  \\ \hline
64 bits, SGH         & 0.028          & 0.096          & 0.053          & 0.220     \\ \hline     
64 bits, SGH-score-aware & \textbf{0.071} & \textbf{0.185} & \textbf{0.093} & \textbf{0.327} \\ \hline
\hhline{|=====|}
128 bits, SGH         & 0.073          & 0.195          & 0.105          & 0.376     \\ \hline     
128 bits, SGH-score-aware & \textbf{0.196} & \textbf{0.406} & \textbf{0.209} & \textbf{0.574} \\ \hline
\hhline{|=====|}
256 bits, SGH         & 0.142          & 0.331          & 0.172          & 0.539     \\ \hline     
256 bits, SGH-score-aware & \textbf{0.362} & \textbf{0.662} & \textbf{0.363} & \textbf{0.820} \\ \hline
\end{tabular}
\caption{We compare Stochastic Generative Hashing \citep{stochastic_hashing} trained with reconstruction loss (SGH) and Stochastic Generative Hashing trained with our score-aware quantization loss (SGH-score-aware) on the \texttt{SIFT1M} dataset. We see that using our score-aware loss greatly improves the recall of Stochastic Generative Hashing.}\label{sgh-score-aware}
\end{table}

\subsection{Dataset Selection for MIPS evaluation}
\label{sec:idiosyncrasy}
In this section we consider dataset choices for benchmarking MIPS systems. In modern large-scale settings, the vectors in the database are often created with neural network embeddings learned by minimizing some training task. This typically leads to the following nice properties:
\begin{itemize}
    \item Low correlation across dimensions.
    \item Equal variance in each dimension.
\end{itemize}
Since our target application is retrieval in such settings, we want our benchmarking dataset to have these properties. This will allow our metrics to better inform how our approach will work in practice.

Datasets that have been widely used for evaluating MIPS systems include \texttt{SIFT1M/1B}, \texttt{GIST1M}, \texttt{Glove1.2M}, \texttt{Movielens}, and \texttt{Netflix}. We see in Figure \ref{dataset-plots} that only \texttt{Glove1.2M} has the properties we want in a benchmarking dataset.

\textbf{\texttt{SIFT1M}, \texttt{SIFT1B}, and \texttt{GIST1M}} are introduced by~\cite{PQ} to illustrate the use of product quantization. \texttt{SIFT} is a keypoint descriptor while GIST is image-level descriptor which have been hand-crafted for image retrieval. These vectors have a high correlation between dimensions and have a high degree of redundancy. Thus the intrinsic dimensions of \texttt{SIFT1M} and \texttt{GIST} are much lower than its dimensionality. 

\textbf{\texttt{Movielens and Netflix}} dataset are formed from the SVD of the rating matrix of Movielens and Netflix websites, respectively. This is introduced by~\cite{ALSH} for MIPS retrieval evaluation. Following SVD of $X = (U \Lambda^{1/2T})(\Lambda^{1/2}V)$, the dimension of these two datasets correspond to the eigenvalues of $X$. Thus the variance of dimensions are sorted by eigenvalues, and the first few dimensions are much more important than later ones. Additionally, the datasets are 10k - 20k in size and thus should not be considered large-scale.

\textbf{\texttt{Glove1.2M}} is a word embeddings dataset similar to word2vec, which use neural-network style training with a bottleneck layer. This datasets exhibits less data distribution problems. It is our general observation that bottleneck layers lead to independent dimensions with similar entropy, making them good datasets for benchmarking for our target retrieval tasks.

\begin{figure}
\centering
\begin{tabular}{|l|l|l|l|}
\hline
Dataset    & Size            & Correlation           & Variance by dimension  \\ \hline
\texttt{SIFT1M}    & (1000000, 128) & \includegraphics[width=0.25 \textwidth]{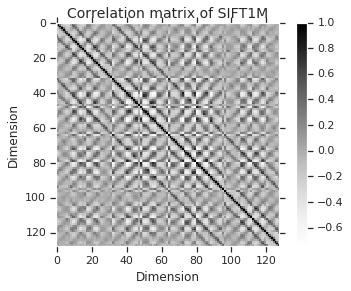} & \includegraphics[width=0.3 \textwidth]{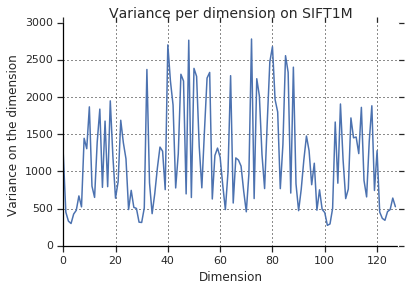}  \\ \hline
\texttt{GIST1M}    & (1000000, 960) & \includegraphics[width=0.25 \textwidth]{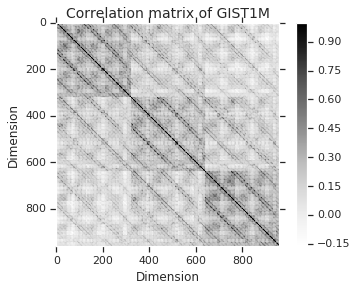} & \includegraphics[width=0.3 \textwidth]{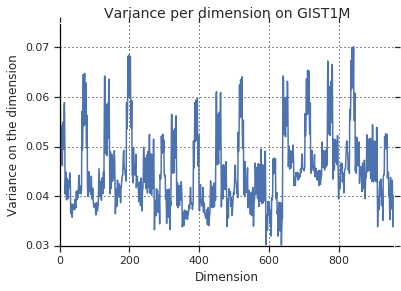}  \\ \hline
\texttt{MovielensSVD}    & (10681, 150) & \includegraphics[width=0.25 \textwidth]{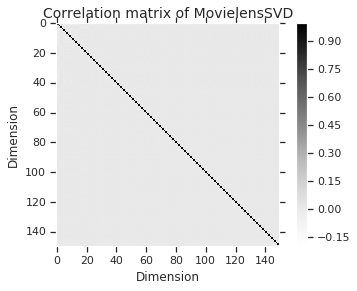} & \includegraphics[width=0.3 \textwidth]{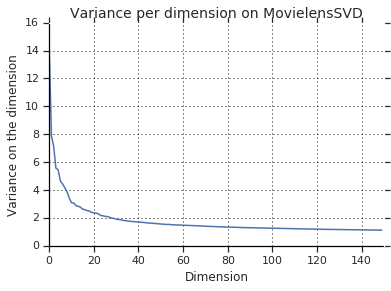}  \\ \hline
\texttt{NetflixSVD}    & (17770, 300) & \includegraphics[width=0.25 \textwidth]{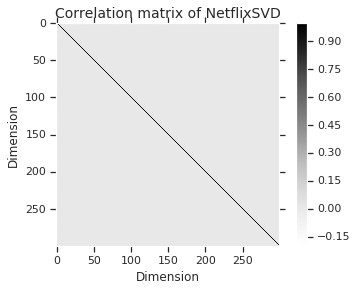} & \includegraphics[width=0.3 \textwidth]{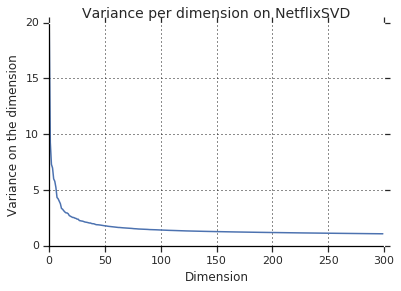}  \\ \hline
\texttt{Glove1.2M}    & (1183514, 100) & \includegraphics[width=0.25 \textwidth]{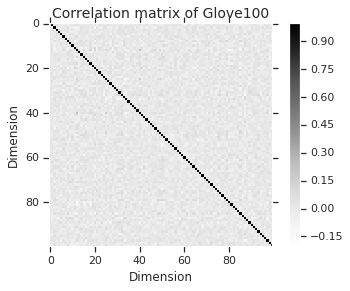} & \includegraphics[width=0.3 \textwidth]{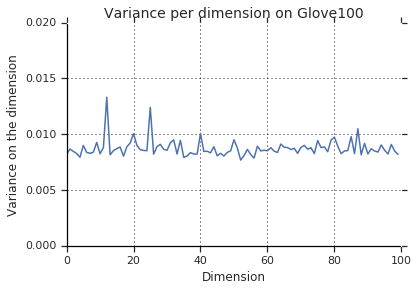}  \\ \hline
\end{tabular}
\caption{We plot the correlation and variance by dimensions of \texttt{SIFT1M}, \texttt{GIST1M}, \texttt{MovielensSVD}, \texttt{NetflixSVD}, and \texttt{Glove1.2M}. We see that \texttt{SIFT1M} and \texttt{GIST1M} have strong correlations between dimensions, and thus their intrinsic dimensions are significantly lower than the original dimensions. We see that \texttt{MovielensSVD} and \texttt{NetflixSVD} suffers from problem of a large variation in the variance across dimensions. In contrast, \texttt{Glove1.2M} has nearly uncorrelated dimensions and roughly equal variance across dimensions, making it a good dataset for our target retrieval tasks.}\label{dataset-plots}
\end{figure}

\end{document}